\documentclass[letterpaper]{article} 
\usepackage[preprint]{aaai2027}
\usepackage[hyphens]{url}  
\usepackage{graphicx} 
\usepackage{natbib}  
\usepackage{caption} 
\usepackage{amsmath}  
\usepackage{amssymb}  
\usepackage{amsthm}   
\newtheorem{theorem}{Theorem}
\newtheorem{corollary}{Corollary}
\newtheorem{proposition}{Proposition}

\newtheorem{lemma}{Lemma}
\usepackage{tikz}     
\usetikzlibrary{arrows.meta}
\definecolor{ringred}{HTML}{B33A3A}
\usepackage{algorithm}
\usepackage{algorithmic}
\usepackage{newfloat}
\usepackage{listings}
\DeclareCaptionStyle{ruled}{labelfont=normalfont,labelsep=colon,strut=off} 
\floatstyle{ruled}
\newfloat{listing}{tb}{lst}{}
\floatname{listing}{Listing}
\graphicspath{{figures/}}

\title{Mapping and Measuring the Behavioral Evolution of Large Language Models}
\author{
Dong Qiao,
Chris Ding,
Jicong Fan\corresponding
}
\affiliations{
School of Data Science, The Chinese University of Hong Kong, Shenzhen, China\\
dongqiao@link.cuhk.edu.cn, chrisding@cuhk.edu.cn, fanjicong@cuhk.edu.cn
}

\begin{document}

\maketitle

\begin{abstract}
Benchmark leaderboards summarize how well a language model performs, but not how
its behavior relates to that of other models or changes across generations. We
characterize the output behavior of 32 models from six families using their
responses to a shared bank of 10{,}000 prompts. After embedding each response,
we construct three complementary sentence-level dissimilarities: an aligned
mean per-prompt distance, which is a pseudometric on observed model responses; a
PCA-compressed summary of prompt-wise disagreement; and an alignment-free
Gromov--Wasserstein discrepancy between models' internal response geometries.
We use these constructions to study static organization and temporal change on
a release-date axis through behavioral maps, family-wise drift, hierarchical
clustering, cross-family convergence, and response-cloud dispersion. Across the
three constructions, model families form coherent clusters, with
\texttt{gpt-2} as a global outlier; cross-family distances decrease over time;
and several recent reasoning-oriented models have comparatively compact
response clouds. A token-level cross-check based on per-prompt Maximum Mean
Discrepancy closely agrees with the sentence-level mean distance (Spearman
$\rho=0.98$) and recovers the same qualitative findings. We organize these
comparisons through a measure-theoretic lens making their alignment and
invariance assumptions explicit. We also establish an architecture-agnostic sufficient condition linking behavioral similarity to inference-prompt coverage, small excess population log-loss, and similar effective target distributions---a possible training-side account rather than an empirical explanation of the observed trends. Our pipeline is
label-free, and re-encoding every response with three further encoders---down
to one $73\times$ smaller---preserves the rank geometry, the outliers, and the
sign of the time trend.
\end{abstract}

\section{Introduction}
The pace of large language model (LLM) releases has outstripped our tools for
understanding how models relate to one another. Standard evaluation compresses
a model's performance on each benchmark into a scalar score
\cite{liang2022holistic,srivastava2023beyond}. Such scores are essential for
measuring capability, but they do not reveal behavioral relationships
\cite{yax2025phylolm,goel2025great,zhang2026spectral}: whether two models answer in similar ways,
whether a new release departs from its predecessors, or whether models from
different vendors are becoming more alike. We therefore ask: \emph{Can the
behavioral evolution of LLMs be mapped and measured directly from their
outputs, without using ground-truth labels?}

We answer this question by applying the same 10{,}000-prompt bank to 32 models,
embedding their responses, and comparing the resulting response collections.
Our contributions are fivefold. First, we introduce three complementary
sentence-level constructions: an aligned mean per-prompt pseudometric, a
PCA-compressed summary of prompt-wise disagreement, and an alignment-free
Gromov--Wasserstein (GW) discrepancy over internal response geometry
\citep{memoli2011gw,peyre2019computational}. Second, we place the models on a
release-date axis and quantify both static structure and temporal change using
behavioral maps, drift curves, ordered heatmaps, hierarchical clustering,
cross-family distance, and response-cloud dispersion. Third, we provide an
empirical characterization of 32 models from six families and identify three
patterns that are stable across constructions: family coherence, decreasing
cross-family distance, and relatively compact response clouds for several
recent reasoning-oriented models. Fourth, we validate the sentence-level
picture with an independent token-level Maximum Mean Discrepancy (MMD) analysis
\citep{gretton2012mmd}. Fifth, we clarify the mathematical relationships among
the constructions and prove a training-side sufficient condition that connects
behavioral similarity to effective target-distribution similarity, excess
population log-loss, and inference-prompt coverage. We additionally quantify
tree-likeness and the low-dimensional structure of prompt-wise disagreement.

\section{Related Work}

\paragraph{Behavioral evaluation and black-box fingerprinting.}
Behavioral evaluation studies models through their observable responses rather
than only through aggregate benchmark scores. CheckList
\citep{ribeiro2020beyond} introduced systematic behavioral testing for NLP
models, while broader surveys organize model behavior across linguistic,
reasoning, knowledge, and social dimensions
\citep{chang2023language}. More recent work uses output behavior to identify or characterize LLMs. LLMmap \citep{pasquini2025llmmap} actively selects prompts
whose responses distinguish deployed model versions, whereas behavioral
fingerprinting \citep{pei2025behavioral} constructs multidimensional profiles
of cognitive and alignment-related tendencies. These methods primarily target
testing, identification, or trait profiling. Our objective is instead to
construct a continuous geometry over models from a shared, broad prompt bank
and use that geometry to measure lineage structure, temporal drift, and
cross-family convergence.

\paragraph{Are models converging?}
The Platonic Representation Hypothesis \citep{huh2024platonic} holds that networks trained on
different data and objectives are converging toward a shared statistical model of reality. The
claim is now contested from two directions. \citet{groger2026aristotelian} advance an Aristotelian
reading in which representations remain particularistic and task-dependent, reporting---via
permutation tests and canonical correlation analysis---that networks diverge more than the
universalist account predicts. \citet{koepke2026platoscave} re-examine cross-modal alignment at
scale and find that the mutual-nearest-neighbor agreement supporting convergence holds on small
samples but degrades substantially as the data grow to millions, leaving the evidence for a common
representation weaker than assumed.

Our analysis is complementary to this debate. We measure convergence in what models
\emph{emit}, rather than in how they \emph{encode}, and study change along a
release-date axis rather than across modalities or objectives. Behavioral
convergence is compatible with either conclusion about internal
representations: output distributions may become more similar even when hidden
representations remain distinct. Our GW analysis probes this distinction by
comparing response-space geometry rather than absolute embedding coordinates.

\paragraph{Comparing models by their internals.}
A parallel line compares models mechanistically. \citet{wang2025universality} use sparse
autoencoders to isolate interpretable features across Transformer \cite{vaswani2017attention} and Mamba \cite{gu2023mamba} architectures, find
most features shared between the two, and show that Mamba's induction circuits are structurally
analogous to their Transformer counterparts---evidence for universality at the level of learned
mechanism. \citet{lan2024universality} apply representational-similarity metrics to
sparse-autoencoder feature spaces across different LLMs and report substantial agreement, with
subspaces for particular semantic concepts aligning especially closely. Representation-similarity
analysis \citep{kornblith2019cka} compares internal activations in the same spirit. All of these
require white-box access to weights or activations. Our distances need only generated text, which
is what makes the closed frontier models in this study measurable at all.

\paragraph{Tracing model evolution from behavior.}
Closest to our setting is LLM DNA \citep{wu2026llmdna}, which derives a
low-dimensional, bi-Lipschitz encoding of functional behavior from model
responses and uses the induced distances to construct a phylogeny over many
models. Both studies compare models through outputs rather than weights, but
their goals differ. LLM DNA compresses behavior into a genotype-like
representation to recover ancestry at scale, primarily among open-weight
models. We retain pairwise behavioral geometry, compare constructions with
different alignment and aggregation assumptions, and focus on temporal change
among predominantly closed models: whether cross-family distances decrease
with release date and how closely the resulting geometry resembles a tree.

\section{Data and Methods}

\paragraph{Models and prompts.}
Let $M=32$ denote the number of evaluated models. They span six families: GPT
\citep{radford2019gpt2}, Claude \citep{anthropic2024claude3}, Gemini
\citep{geminiteam2023gemini}, Qwen \citep{qwen2024qwen25}, Llama
\citep{grattafiori2024llama3}, and Mistral \citep{jiang2023mistral}. Every model
answers the same canonical bank of $N{=}10{,}000$ prompts drawn from 13 public
benchmarks, including MMLU \citep{hendrycks2021mmlu}, ARC
\citep{clark2018arc}, HellaSwag \citep{zellers2019hellaswag}, WinoGrande
\citep{sakaguchi2020winogrande}, GSM8K \citep{cobbe2021gsm8k}, MATH
\citep{hendrycks2021math}, and TruthfulQA \citep{lin2022truthfulqa}. Thus,
response index $i$ always refers to the same prompt across models. The complete
benchmark composition and model roster appear in the appendix
(Tables~\ref{tab:benchmarks} and~\ref{tab:roster}); Table~\ref{tab:models}
summarizes the six families and their release spans.

\begin{table}[t]
\centering
\small
\begin{tabular}{lccl}
\hline
Family & \# & Release span & Latest (this study)\\
\hline
GPT     & 5 & 2019--2025 & \texttt{gpt-5.2}\\
Claude  & 8 & 2025--2026 & \texttt{claude-opus-4-7-think}\\
Gemini  & 4 & 2024--2026 & \texttt{gemini-3.1-pro-preview}\\
Qwen    & 5 & 2024--2026 & \texttt{qwen3.7-max}\\
Llama   & 5 & 2024--2025 & \texttt{llama-4-maverick}\\
Mistral & 5 & 2024--2025 & \texttt{mistral-large-2512}\\
\hline
\end{tabular}
\caption{The 32 evaluated models grouped by family. Dates marked as approximate
in Table~\ref{tab:roster} are used only for ordering and horizontal placement
on the time axis.}
\label{tab:models}
\end{table}

\paragraph{Response embeddings.}
We encode each complete response with Qwen3-Embedding-8B
\citep{zhang2025qwen3embedding} and mean-pool its token embeddings, following
the sentence-embedding paradigm \citep{reimers2019sbert}. The encoder has a
maximum input length of $1024$ tokens and produces a $4096$-dimensional vector;
we do not apply $L_2$ normalization. For model $m$, the resulting matrix
$E_m\in\mathbb{R}^{N\times4096}$ contains one response embedding per prompt and
is row-aligned with every other model's matrix.

\begin{figure*}[t]
\centering
\tikzset{
  lbl/.style={font=\scriptsize},
  dm/.style={circle, fill=blue!55, inner sep=1.5pt},
  dn/.style={circle, fill=orange!85, inner sep=1.5pt},
}
\begin{minipage}[t]{0.32\textwidth}\centering
\textbf{\small (a) Mean per-prompt $D^{\mathrm{mean}}$}\\[3pt]
\begin{tikzpicture}[baseline]
  \foreach \i/\y in {1/2.4, 2/1.8, 3/1.2, 4/0.6}{
    \node[dm] (m\i) at (0,\y){};
    \node[dn] (n\i) at (2.2,\y){};
    \draw[dashed, gray!55] (m\i) -- (n\i);
  }
  \draw[-{Latex}, thick] (m2) -- (n2);
  \node[lbl, above=1pt] at (1.1,1.8){$\lVert E_m[i]-E_{m'}[i]\rVert$};
  \node[lbl] at (0,2.85){$E_m$};
  \node[lbl] at (2.2,2.85){$E_{m'}$};
\end{tikzpicture}\\[3pt]
\end{minipage}
\hfill
\begin{minipage}[t]{0.32\textwidth}\centering
\textbf{\small (b) PCA-compressed $D^{\mathrm{pca}}$}\\[3pt]
\begin{tikzpicture}[baseline]
  \foreach \dx/\dy in {0.34/0.34, 0.17/0.17}{
    \draw[fill=violet!12] (\dx,\dy) rectangle ++(1.1,1.1);
  }
  \draw[fill=violet!22] (0,0) rectangle ++(1.1,1.1);
  \foreach \g in {0.275,0.55,0.825}{
    \draw[violet!45, very thin] (\g,0) -- (\g,1.1);
    \draw[violet!45, very thin] (0,\g) -- (1.1,\g);
  }
  \node[lbl] at (0.55,-0.3){per-prompt $32\times32$};
  \node[lbl] at (0.7,1.75){stacked over $N$};
  \draw[-{Latex}, thick] (1.75,0.7) -- (2.55,0.7);
  \node[lbl, above=1pt] at (2.15,0.75){PCA};
  \draw[fill=violet!35] (2.7,0.17) rectangle ++(1.1,1.1);
  \node[lbl] at (3.25,-0.05){$D$: PC$_1$ scores};
\end{tikzpicture}\\[3pt]
{\scriptsize variance-weighted compression over prompts}
\end{minipage}
\hfill
\begin{minipage}[t]{0.32\textwidth}\centering
\textbf{\small (c) Gromov--Wasserstein $D^{\mathrm{gw}}$}\\[3pt]
\begin{tikzpicture}[baseline,
  im/.style={circle, fill=teal!70, inner sep=1.3pt},
  inn/.style={circle, fill=red!65, inner sep=1.3pt}]
  \node[im] (a1) at (0,2.3){};
  \node[im] (a2) at (0.85,1.9){};
  \node[im] (a3) at (0.25,1.35){};
  \draw[teal!55] (a1)--(a2)--(a3)--(a1);
  \node[lbl] at (0.1,2.75){$C_m$};
  \node[inn] (b1) at (3.15,2.4){};
  \node[inn] (b2) at (3.85,1.85){};
  \node[inn] (b3) at (3.15,1.35){};
  \draw[red!50] (b1)--(b2)--(b3)--(b1);
  \node[lbl] at (3.95,2.8){$C_{m'}$};
  \draw[dashed, gray!70] (a1)--(b2);
  \draw[dashed, gray!70] (a2)--(b3);
  \draw[dashed, gray!70] (a3)--(b1);
  \node[lbl, gray!70] at (1.95,0.75){optimal coupling};
\end{tikzpicture}\\[-17pt]
{\scriptsize matches internal geometries; alignment-invariant}
\end{minipage}
\vspace{-8pt}
\caption{Three sentence-level comparison constructions. \textbf{(a)} The mean per-prompt distance compares embeddings of the
same prompt and averages the Euclidean distances; it is an aligned
pseudometric. \textbf{(b)} The PCA construction compresses each pair's vector
of prompt-wise distances onto the leading disagreement direction; the resulting
score need not be a metric. \textbf{(c)} GW compares internal response geometry
under an optimized coupling. With cosine costs, it is invariant to orthogonal
transformations and positive rescaling of individual embedding vectors, and it does not require prompt indices to
be matched.}
\label{fig:distances}
\end{figure*}

\paragraph{Cross-model dissimilarities.}
For models $m$ and $m'$, we construct an $M\times M$ pairwise matrix in three
ways (Fig.~\ref{fig:distances}).
\begin{itemize}
\item \emph{Mean per-prompt distance.} We preserve the known prompt alignment
and average Euclidean distances between paired response embeddings:
\begin{equation}
D^{\mathrm{mean}}_{m,m'}=\frac{1}{N}\sum_{i=1}^{N}
\lVert E_m[i]-E_{m'}[i]\rVert_2.
\label{eq:dmean}
\end{equation}
It is a pseudometric on models represented by their observed
embedding matrices; it becomes a metric after quotienting out models with
identical encoded responses on every prompt (Proposition~\ref{prop:pseudometric} in Appendix).

\item \emph{PCA-compressed disagreement.} For prompt $i$, let
$M^{(i)}\in\mathbb{R}^{M\times M}$ contain the pairwise distances
$M^{(i)}_{m,m'}=\lVert E_m[i]-E_{m'}[i]\rVert_2$. For each model pair, collect
its prompt-wise disagreement profile
$x_{m,m'}=(M^{(1)}_{m,m'},\ldots,M^{(N)}_{m,m'})^\top\in\mathbb{R}^{N}$.
Applying rank-one PCA to these pair profiles gives
\begin{equation}
s_{m,m'}=\bigl\langle x_{m,m'}-\bar{x},u_1\bigr\rangle,\
D^{\mathrm{pca}}_{m,m'}=\mathcal{S}(s_{m,m'}),
\label{eq:dpca}
\end{equation}
where $u_1$ is the leading principal direction and $\bar{x}$ is the mean across model pairs,
and $\mathcal{S}$ denotes the sign orientation, symmetrization, and rescaling
used to form the displayed matrix. It is a one-dimensional summary of how
pairs disagree across prompts, not necessarily a metric.

\item \emph{Gromov--Wasserstein discrepancy.} For model $m$, let
$C_m\in\mathbb{R}^{N\times N}$ be the matrix of cosine dissimilarities among
its own response embeddings, and let $\mathbf{u}_m$ be the uniform measure on
those responses. We compute the squared GW discrepancy
\begin{equation}
D^{\mathrm{gw}}_{m,m'}=
\min_{\pi\in\Pi(\mathbf{u}_m,\mathbf{u}_{m'})}
\sum_{a,b,c,d}
\bigl(C_m[a,b]-C_{m'}[c,d]\bigr)^2\pi_{ac}\pi_{bd},
\label{eq:dgw}
\end{equation}
where
$\Pi(\mathbf{u}_m,\mathbf{u}_{m'})=
\{\pi\geq0:\pi\mathbf{1}=\mathbf{u}_m,
\pi^\top\mathbf{1}=\mathbf{u}_{m'}\}$.
GW compares internal response geometry under an optimized coupling and therefore
does not require prompt indices to be matched. Because $C_m$ uses cosine
dissimilarity, the construction is unchanged by model-specific orthogonal
transformations and positive rescaling of the embedding vectors, but not by
translations. We solve Eq.~\eqref{eq:dgw} with POT's conditional-gradient
solver \citep{flamary2021pot}. Full $N\times N$ costs are computationally
prohibitive, so all models use the same seeded subsample of $256$ prompts.
\end{itemize}

\paragraph{A measure-theoretic comparison lens.}
Let $q_m(\cdot\mid x)$ denote model $m$'s response distribution for prompt $x$,
and let $\nu_m^x=\phi_{\#}q_m(\cdot\mid x)$ be its pushforward under the
response encoder $\phi$, e.g., Qwen3-Embedding-8B. The aligned constructions condition on the same prompt
$x$ for both models. At the sentence level, the population quantity naturally
associated with one independently generated response from each model is
\begin{equation}
\mathbb{E}_{x}\mathbb{E}_{(Z,Z')\sim\nu_m^x\otimes\nu_{m'}^x}
\|Z-Z'\|_2,
\label{eq:population-paired-distance}
\end{equation}
When the prompt bank is viewed as a sample from the inference distribution and responses are sampled independently, Eq.~\eqref{eq:dmean} is the corresponding Monte Carlo estimate. This
quantity differs from the distance between conditional means,
$\|\mathbb{E}Z-\mathbb{E}Z'\|_2$, unless decoding is deterministic or
within-prompt variability is negligible. The PCA construction does not define a
new per-prompt divergence; it compresses the vector of aligned per-prompt
dissimilarities into a single score. The token-level MMD analysis, to be introduced in Section \ref{sec_mmd}, remains
prompt-aligned but replaces each pooled response vector by the empirical measure
of its token embeddings. In contrast, GW discards the fixed prompt alignment
and compares only the internal geometry of each model's response cloud under an
optimized coupling.

Thus, the constructions form a hierarchy of comparison assumptions rather than
four instances of a single metric: they differ in representation (pooled
response versus token measure), aggregation (mean versus PCA compression), and
alignment (fixed prompts versus optimized coupling). Their high empirical rank
correlations therefore provide a robustness check: the principal organization
of the models is stable despite these changes in representation and alignment.
For token MMD, taking the square root recovers a metric on empirical
measures when the kernel is characteristic. We report the conventional squared
GW objective on cosine-dissimilarity networks; taking its square root leaves
all rank-based analyses unchanged, although a formal metric interpretation of
GW requires metric ground costs.

\paragraph{Visualization and temporal summaries.}
We visualize each pairwise matrix with metric multidimensional scaling (MDS)
\citep{kruskal1964mds}, t-SNE \citep{vandermaaten2008tsne}, and UMAP
\citep{mcinnes2018umap}; per-family panels reuse the corresponding global
coordinates. To study evolution, we associate each model with its release date
and report (i) cumulative drift from the first model in its family, (ii) step
drift between consecutive family releases, (iii) an ordered distance heatmap
and average-linkage dendrogram, (iv) mean distance to models from other
families, and (v) response-cloud dispersion, defined as the mean off-diagonal
entry of $C_m$. A few release dates are approximate and affect only the
horizontal placement of points, not any pairwise dissimilarity. The analysis is
related in spirit to representation-similarity methods
\citep{kornblith2019cka}, but uses model outputs rather than hidden activations.

\section{Results}

\begin{figure}[t]
\centering
\includegraphics[width=0.95\columnwidth,trim={10 20 10 5},clip]{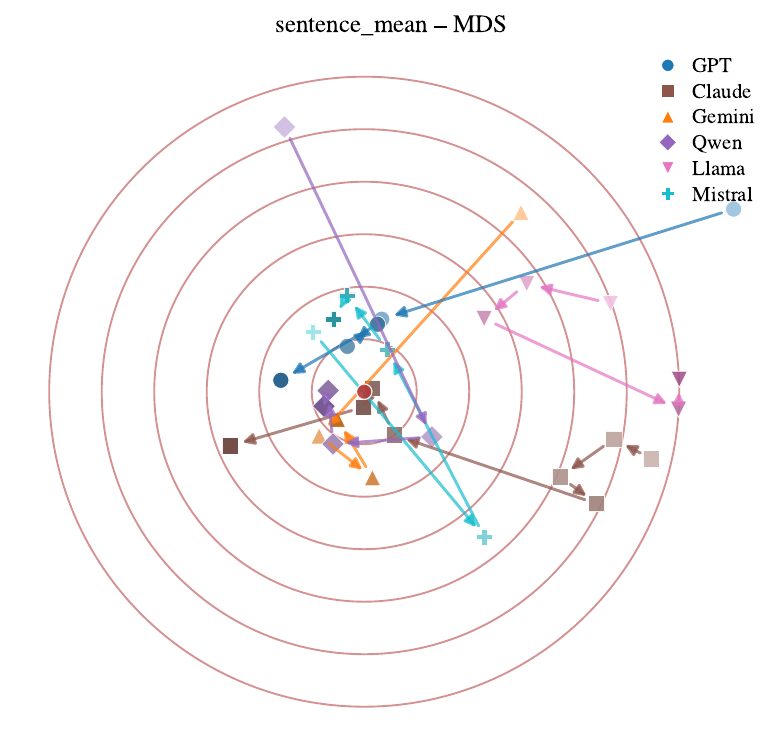}
\caption{Behavioral map obtained by applying metric MDS to the mean
per-prompt distance $D^{\mathrm{mean}}$. Marker shape and color identify model family; darker
shades denote later releases. Labels are omitted for readability and are
provided in Fig.~\ref{fig:mean_panels} and Table~\ref{tab:roster}.
\texttt{gpt-2} (the lightest GPT marker) is the global outlier.
\textcolor{ringred}{The concentric rings are centered on the dark red dot, the
mean map position of the six families' latest releases; their radii are evenly
spaced out to the $95$th percentile of distances to that center.}}
\label{fig:mean}
\end{figure}

\paragraph{Behavioral cartography (Fig.~\ref{fig:mean}).}
The mean per-prompt distance $D^{\mathrm{mean}}$ exhibits strong family structure: $21$ of the $32$
models have a nearest neighbor from the same family. The closest pair is
\texttt{gpt-3.5-turbo} and \texttt{gpt-4.1-mini}
($D^{\mathrm{mean}}\approx6.8$). In contrast, \texttt{gpt-2} has the largest
average distance to all other models ($\approx88.5$) and is therefore the global
outlier under this construction. Several cross-family nearest-neighbor
relations involve recent GPT, Gemini, and Qwen models, which is consistent with
the cross-family convergence examined below.

\paragraph{Relational geometry via GW (Fig.~\ref{fig:gw}, appendix).}
Because GW compares internal geometry rather than absolute embedding
coordinates, its family signal is weaker: $14/32$ models have a same-family
nearest neighbor. Several recent models pair across vendors, including
\texttt{gemini-3.1-pro-preview} with \texttt{qwen3.7-max} and an
\texttt{opus}-class Claude model with \texttt{gpt-5.2}. These pairings indicate
similar response-space geometry; they should not be interpreted as direct
comparisons of capability.

\paragraph{Drift (Fig.~\ref{fig:drift}).}
Family-wise drift quantifies the observed displacement between releases. The
\texttt{gpt-2}$\rightarrow$\texttt{gpt-3.5-turbo} transition is substantially
larger than every later step. Among releases dated 2024--2026, step distances
are smaller and broadly comparable across families. This is a descriptive
pattern; the distance alone does not identify which changes in data, training,
or decoding produced the drift.

\paragraph{Block and hierarchical structure (Fig.~\ref{fig:struct}).}
Ordering the distance matrix by family and date reveals low-distance
within-family blocks and two high-distance rows and columns:
\texttt{gpt-2} and \texttt{qwen-2.5-72b-instruct}. The average-linkage
dendrogram separates these models early; several subsequent clusters contain
models from multiple vendors. The hierarchy therefore reflects lineage only
partially, especially among recent releases.

\paragraph{Convergence and dispersion (Fig.~\ref{fig:conv}).}
Mean distance from each model to models in other families decreases with release
date, yielding a negative fitted trend. This provides descriptive evidence of
behavioral homogenization among later releases, although the observational
analysis does not establish a causal temporal mechanism. Response-cloud
dispersion provides a complementary within-model statistic. Several recent
reasoning-oriented models, including \texttt{claude-opus-4-7-think} and
\texttt{gpt-5.2}, have lower dispersion than some mid-generation predecessors;
under the chosen encoder, their responses therefore occupy a more compact
embedding cloud.

\begin{figure}[t]
\centering
\includegraphics[width=1\linewidth,trim={10 10 10 5},clip]{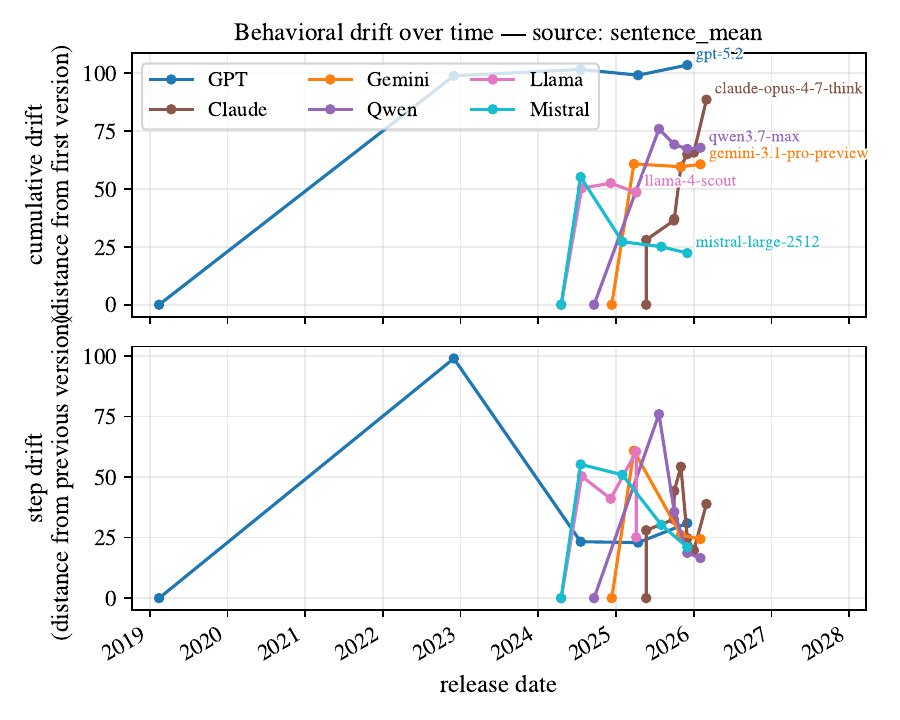}
\caption{Behavioral drift over time (mean metric). Top: cumulative drift from each family's first
version; bottom: step drift between consecutive versions. The \texttt{gpt-2}$\rightarrow$%
\texttt{gpt-3.5-turbo} jump dwarfs modern steps.}
\label{fig:drift}
\end{figure}

\begin{figure}[t]
\centering
\includegraphics[width=\linewidth]{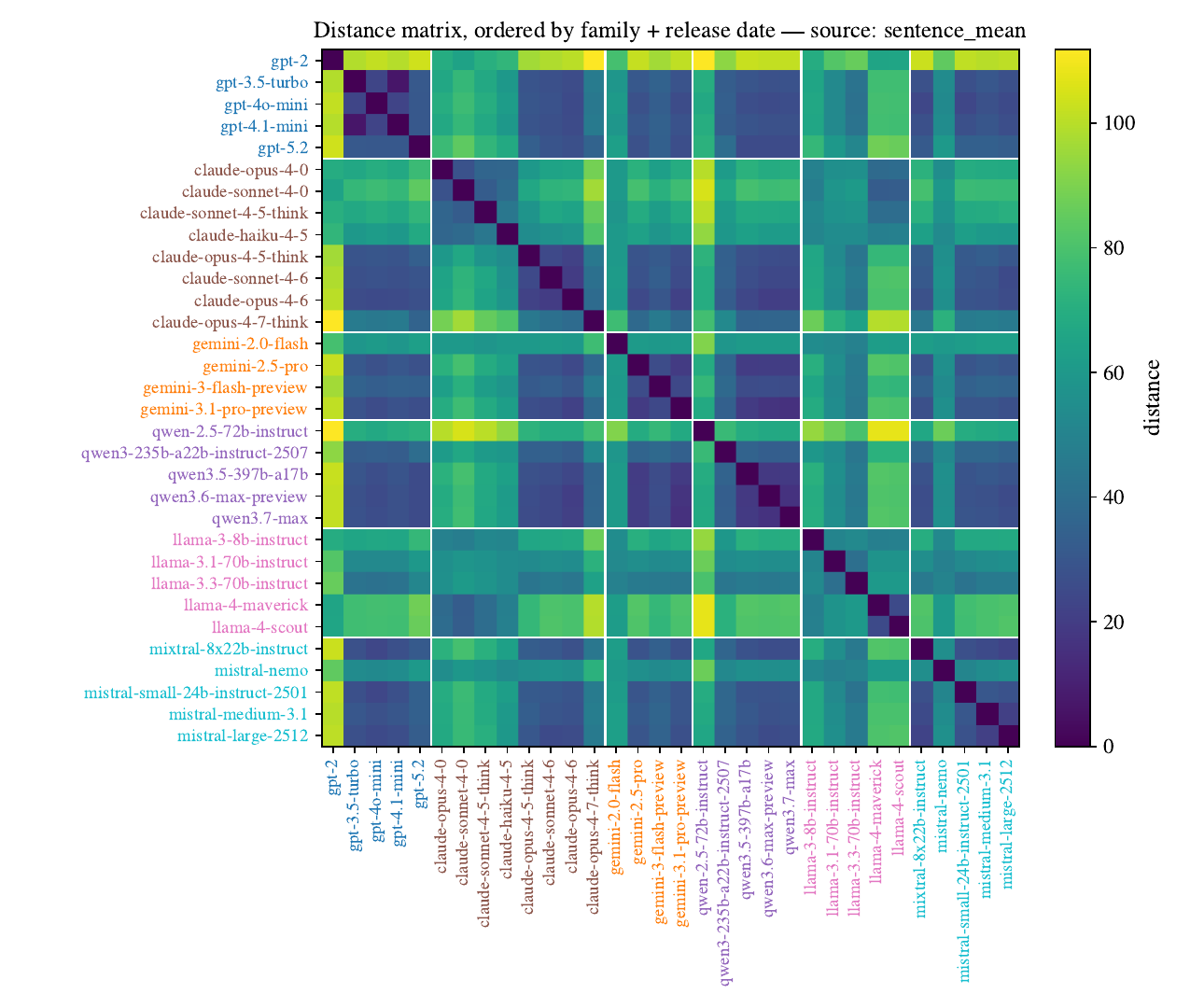}\\[4pt]
\includegraphics[width=\linewidth]{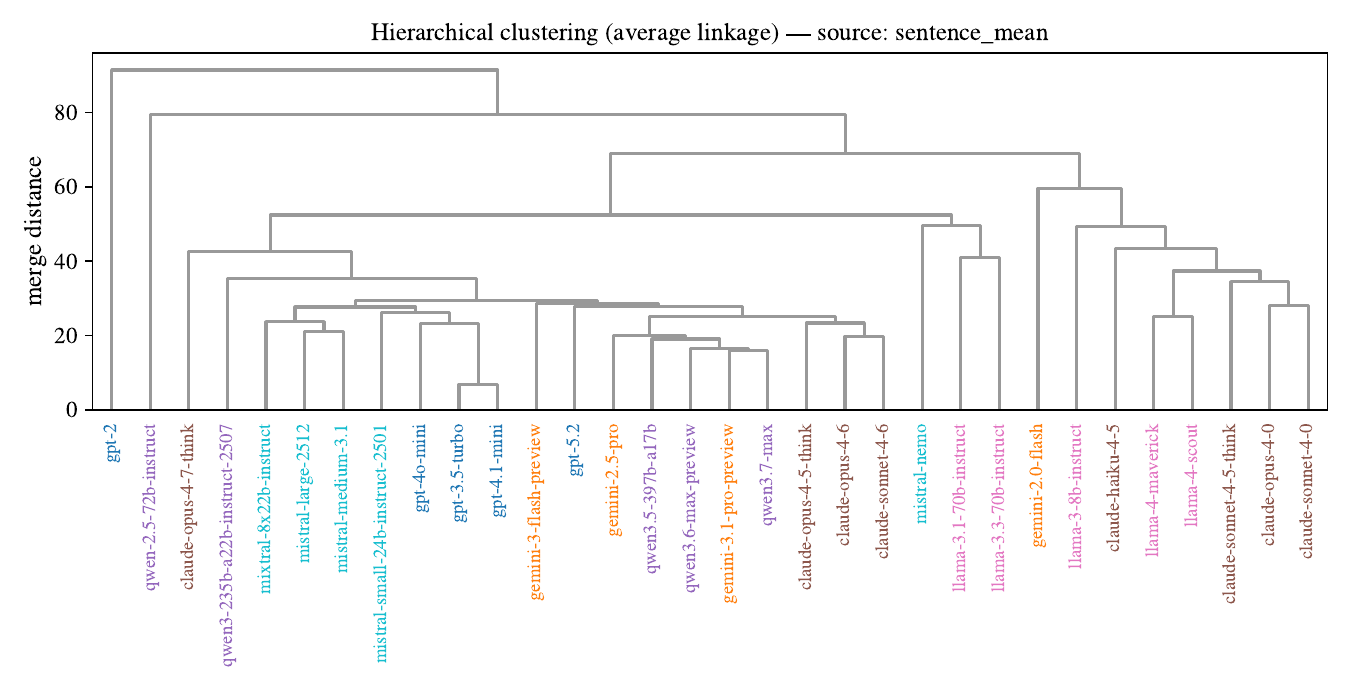}
\caption{Top: distance matrix ordered by family and release date; white lines separate families,
labels are family-colored. Bottom: average-linkage hierarchical clustering; \texttt{gpt-2} and
\texttt{qwen-2.5-72b-instruct} split off first, and recent models mix across families.}
\label{fig:struct}
\end{figure}

\begin{figure}[t]
\centering
\includegraphics[width=\linewidth]{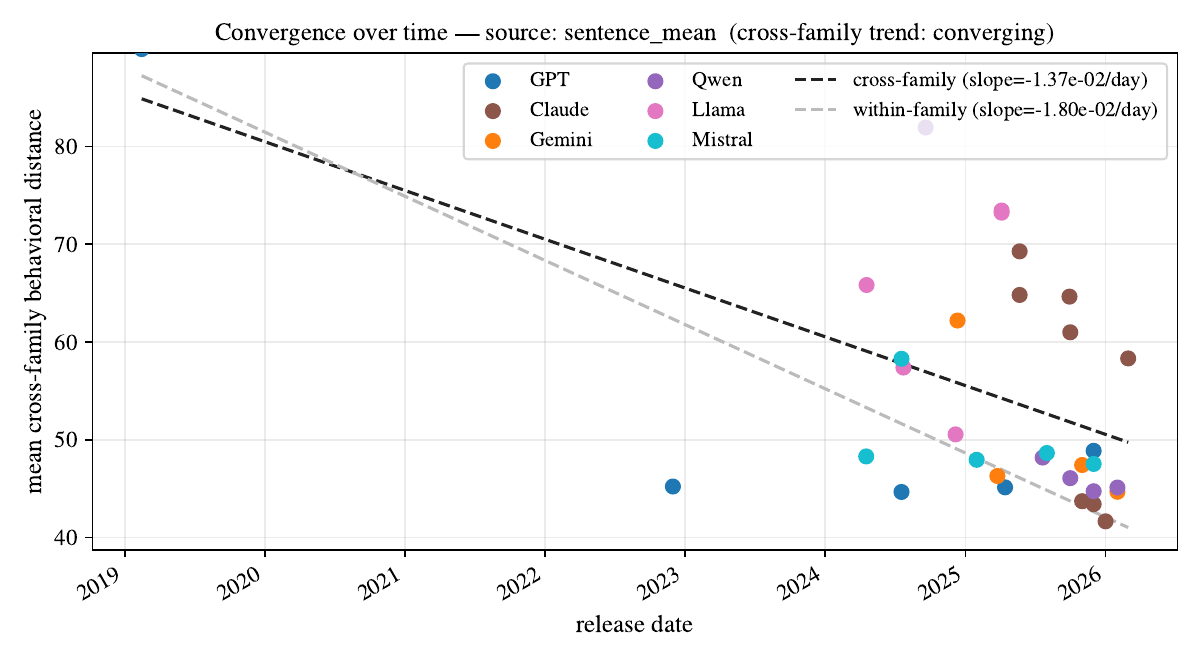}\\[4pt]
\includegraphics[width=\linewidth]{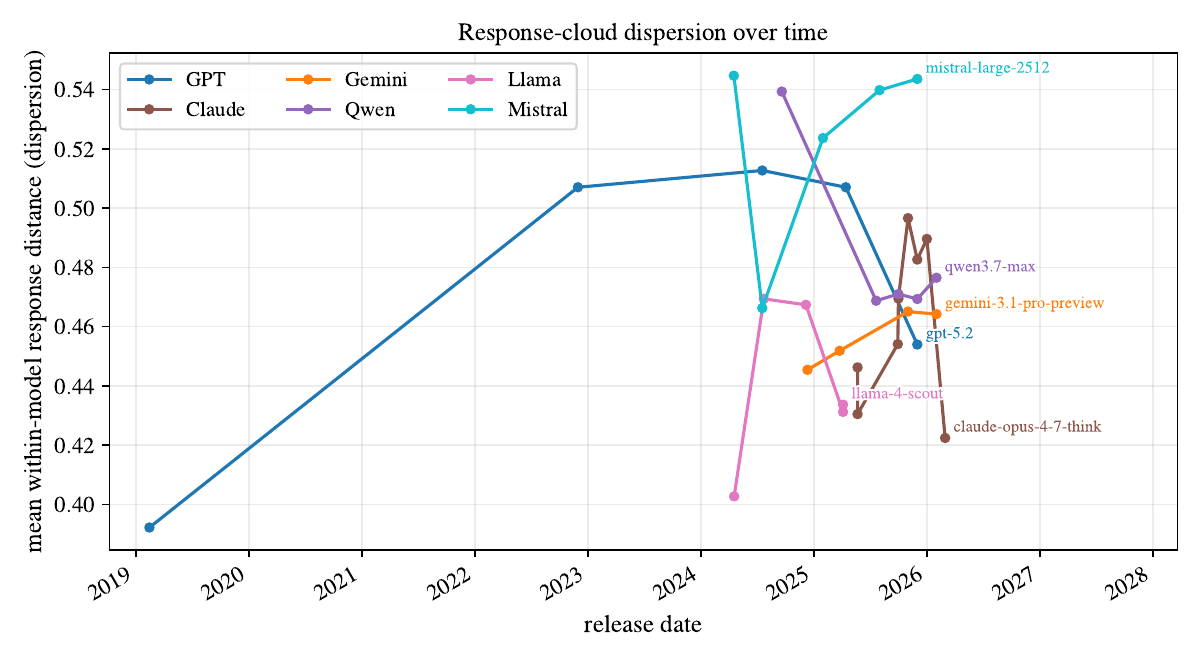}
\caption{Top: mean distance from each model to models in other families
versus release date, together with the fitted downward trend. Bottom:
within-model response-cloud dispersion over time. Several recent
reasoning-oriented models have comparatively compact response clouds.}
\label{fig:conv}
\end{figure}

\begin{figure*}[t]
\centering
\begin{minipage}{0.21\textwidth}\centering
\includegraphics[width=\linewidth]{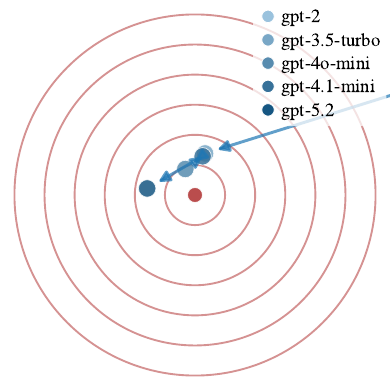}
\end{minipage}\hspace{0.02\textwidth}
\begin{minipage}{0.21\textwidth}\centering
\includegraphics[width=\linewidth]{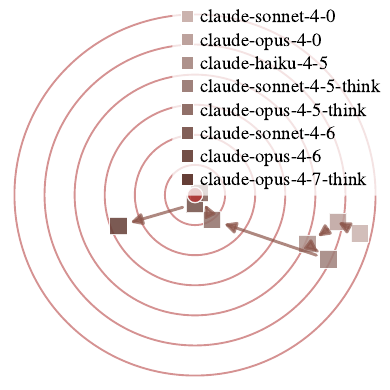}
\end{minipage}\hspace{0.02\textwidth}
\begin{minipage}{0.21\textwidth}\centering
\includegraphics[width=\linewidth]{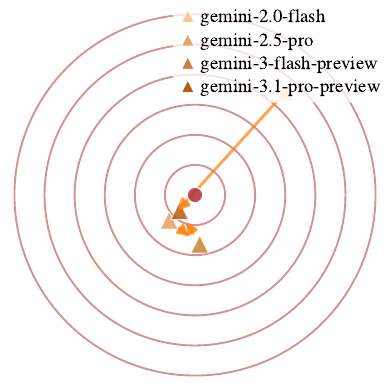}
\end{minipage}\hspace{0.02\textwidth}
\begin{minipage}{0.21\textwidth}\centering
\includegraphics[width=\linewidth]{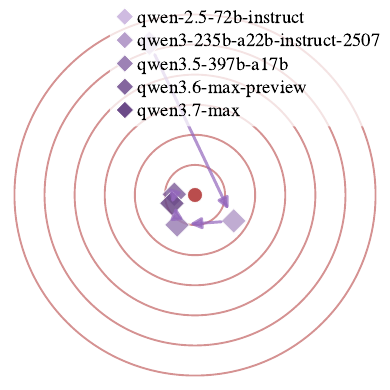}
\end{minipage}
\caption{Per-family detail for Fig.~\ref{fig:mean}: a common ring-bounded frame (identical axes)
with oldest$\rightarrow$latest arrows. Left to right: GPT, Claude, Gemini, Qwen; all six families
appear in Fig.~\ref{fig:mean_panels_full} (appendix).}
\label{fig:mean_panels}
\end{figure*}

\paragraph{Robustness across projections and constructions.}
The principal structure is stable across metric MDS, t-SNE, and UMAP
(Fig.~\ref{fig:proj}), so it is not specific to one projection. The
PCA-compressed score tracks the mean distance closely (Spearman $\rho=0.92$;
$17/32$ same-family nearest neighbors; Fig.~\ref{fig:pca}), while GW is less
correlated ($\rho=0.77$), as expected from its removal of fixed prompt
alignment and absolute coordinates. Family blocks, the \texttt{gpt-2} outlier,
and the downward trend nonetheless persist under all three sentence-level
constructions.

\paragraph{Robustness to the response encoder.}
The constructions above share one encoder, so their agreement probes
aggregation and alignment rather than the choice of $\phi$. Re-encoding all
$32\times10^{4}$ responses with \texttt{bge-m3},
\texttt{multilingual-e5-large}, and \texttt{all-mpnet-base-v2}---the last
$73\times$ smaller, none sharing a vendor with an evaluated model---preserves
the geometry: $\rho=0.92$--$0.93$ over the $496$ pairs, $19$--$22$ of $32$
same-family nearest neighbors against $21$, and a negative trend throughout
($r$ from $-0.52$ to $-0.55$). Every encoder returns \texttt{gpt-2} and
\texttt{qwen-2.5-72b-instruct} as the two outliers, so the Qwen-family outlier
is recovered by encoders with no Qwen lineage (Table~\ref{tab:encoders},
Fig.~\ref{fig:encoders}).

\begin{table}[t]
\centering
\small
\begin{tabular}{lcc}
\hline
Distance construction & same-family NN & $\rho$ vs.\ mean\\
\hline
Mean per-prompt (pseudometric) & $21/32$ & $1.00$\\
PCA-compressed           & $17/32$ & $0.92$\\
Gromov--Wasserstein      & $14/32$ & $0.77$\\
\hline
\end{tabular}
\caption{Agreement among sentence-level constructions. The first column
reports the fraction of models whose nearest neighbor is from the same family;
the second reports Spearman correlation over the $\binom{32}{2}=496$ distinct
model pairs.}
\label{tab:agree}
\end{table}

\section{A Token-level Lens: Per-prompt MMD}\label{sec_mmd}
The sentence-level analyses pool every response into one vector. As an
independent cross-check, we retain the token embeddings within each response and
compare the resulting empirical token measures prompt by prompt.

\paragraph{Construction.}
For two responses with token embeddings $X=\{x_a\}_{a=1}^{T}$ and
$Y=\{y_b\}_{b=1}^{S}$, define
$\widehat{\lambda}_X=T^{-1}\sum_{a=1}^{T}\delta_{x_a}$ and
$\widehat{\lambda}_Y=S^{-1}\sum_{b=1}^{S}\delta_{y_b}$. Their dissimilarity is
the squared MMD under an RBF kernel:
\begin{equation}
\begin{aligned}
\mathrm{MMD}^2(\widehat{\lambda}_X,\widehat{\lambda}_Y)
={}&\frac{1}{T^2}\sum_{a,a'}k(x_a,x_{a'})
+\frac{1}{S^2}\sum_{b,b'}k(y_b,y_{b'})\\
&-\frac{2}{TS}\sum_{a,b}k(x_a,y_b).
\end{aligned}
\label{eq:token-mmd}
\end{equation}
This is the exact squared RKHS distance between the two \emph{observed
empirical measures}; viewed as an estimator of a latent population-token MMD, it is the
usual biased plug-in estimator. It is well defined for every response length.
For two single-token responses, Eq.~\eqref{eq:token-mmd} reduces to
$2(1-\exp[-\|x-y\|_2^2/(2\sigma^2)])$, so no special case is needed. We use the
same estimator for all lengths because the unbiased U-statistic is undefined
for a one-token response. The plug-in estimator's finite-sample bias as an
estimator of population MMD depends on response length (median: $74$ tokens),
so unequal lengths could perturb comparisons. The high empirical agreement with
the sentence-level analysis reported below suggests that this issue does not
materially alter the model-pair rankings in our data.
For each prompt, Eq.~\eqref{eq:token-mmd} yields an $M\times M$ matrix. Stacking
over prompts gives an $N\times M\times M$ tensor. Token embeddings are streamed
rather than stored; only the approximately $40$ MB tensor is written to disk.
We aggregate it by the mean and by the same PCA procedure used for the
sentence-level profiles.

\paragraph{Agreement with the sentence-level analysis.}
The mean token-level matrix correlates strongly with the sentence-level mean
distance: Spearman $\rho=0.98$ and Pearson $0.97$ over the
$\binom{32}{2}=496$ distinct model pairs, and $28/32$ models share a nearest
neighbor under the two constructions. The token analysis recovers the same
qualitative structure: the \texttt{gpt-2} and \texttt{qwen-2.5-72b} outliers,
coherent family blocks (Figs.~\ref{fig:tok_map} and~\ref{fig:tok_struct}), and a
decreasing cross-family trend (Fig.~\ref{fig:tok_conv}). That a pooled sentence
representation and a token-distribution representation agree indicates the main
conclusions are not driven by one representation alone.

\begin{figure}[t]
\centering
\includegraphics[width=0.95\columnwidth,trim={10 20 10 5},clip]{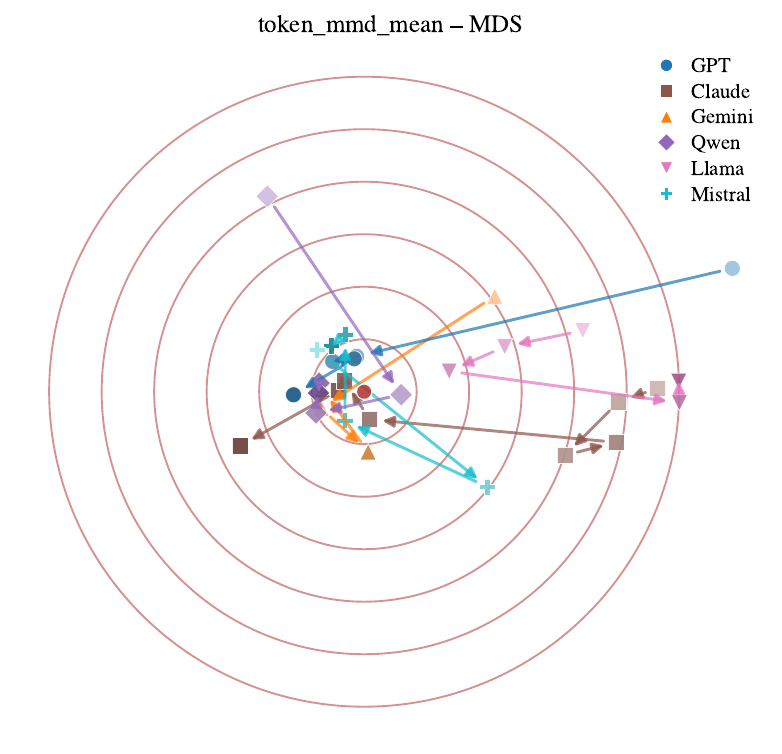}
\caption{Metric-MDS map of the token-level dissimilarity obtained by
averaging per-prompt squared MMD. Visual encoding follows Fig.~\ref{fig:mean}.
The family structure and the \texttt{gpt-2} outlier closely match the
sentence-level map (Spearman $\rho=0.98$); per-family details appear in
Fig.~\ref{fig:tok_panels}.}
\label{fig:tok_map}
\vspace{-10pt}
\end{figure}

\section{A Theory of Behavioral Similarity}
The empirical analysis is output-side; it does not identify the training causes
of convergence. We nevertheless give a sufficient training-side condition
under which two models must be behaviorally close.

Let $\mathcal{P}_m$ be an effective target population for model $m$, with prompt
marginal $\mathcal{P}_{m,X}$ and conditional response distribution
$p_m(r\mid x)$. For a model trained only by next-token prediction, this may be
interpreted as its training population. For a post-trained system, treating
$\mathcal{P}_m$ as an effective target distribution is an additional modeling
assumption. The deployed model defines $q_m(r\mid x)$, including the
end-of-sequence token, and has population negative-log-likelihood risk
\[
\mathcal{R}_m(q)=
\mathbb{E}_{(x,r)\sim\mathcal{P}_m}[-\log q(r\mid x)].
\]
The population minimizer over all conditional distributions is $p_m$, and the
excess risk is
\begin{equation}
\varepsilon_m
=\mathcal{R}_m(q_m)-\mathcal{R}_m(p_m)
=\mathbb{E}_{x\sim\mathcal{P}_{m,X}}
\mathrm{KL}\!\left(p_m(\cdot\mid x)\Vert q_m(\cdot\mid x)\right).
\label{eq:excess-risk}
\end{equation}

Let $\xi$ denote the inference-prompt distribution. For each
$j\in\{m,m'\}$, assume the covariate-coverage condition
\begin{equation}
\xi\ll\mathcal{P}_{j,X},
\qquad
\left\|\frac{d\xi}{d\mathcal{P}_{j,X}}\right\|_{\infty}
\leq\kappa_j<\infty.
\label{eq:coverage}
\end{equation}
Absolute continuity excludes inference regions that receive no mass under the
effective training prompt distribution, where training risk gives no control.
The density-ratio bound limits how strongly inference can upweight prompts that
are rare during training; $\kappa_j$ therefore quantifies the severity of
covariate shift. This assumption is weaker than equality of the training and
inference prompt distributions.

For probability measures $P$ and $Q$ on the response space, define
\[
\mathrm{TV}(P,Q)=\sup_{A}|P(A)-Q(A)|,
\]
where the supremum is over measurable events $A$. Total variation is the
largest probability discrepancy assigned to the same event.

\begin{theorem}[Behavioral similarity from population risk]
\label{thm:behavioral_similarity}
For models $m$ and $m'$, define the effective target-distribution discrepancy
\[
\eta^{\mathrm{txt}}_{m,m'}=
\mathbb{E}_{x\sim\xi}\mathrm{TV}\!\left(
 p_m(\cdot\mid x),p_{m'}(\cdot\mid x)\right).
\]
If the excess risks in Eq.~\eqref{eq:excess-risk} are finite and the coverage
condition in Eq.~\eqref{eq:coverage} holds for both models, then
\begin{equation}
\begin{aligned}
&\mathbb{E}_{x\sim\xi}\mathrm{TV}\!\left(
q_m(\cdot\mid x),q_{m'}(\cdot\mid x)\right)\\
&\quad\leq \eta^{\mathrm{txt}}_{m,m'}
+\sqrt{\frac{\kappa_m\varepsilon_m}{2}}
+\sqrt{\frac{\kappa_{m'}\varepsilon_{m'}}{2}}.
\end{aligned}
\label{eq:tv-behavior-bound}
\end{equation}
\end{theorem}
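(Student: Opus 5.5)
The plan is to bound the left-hand side of Eq.~\eqref{eq:tv-behavior-bound} pointwise in $x$ with the triangle inequality for total variation, and then handle each model's term by Pinsker's inequality, a change of measure from $\xi$ to $\mathcal{P}_{j,X}$ via the coverage condition, and Jensen's inequality.

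\textbf{Pointwise split.} For every prompt $x$ in the support of $\xi$, TV is a metric on probability measures over the response space, so
\[
\mathrm{TV}(q_m,q_{m'})\leq \mathrm{TV}(q_m,p_m)+\mathrm{TV}(p_m,p_{m'})+\mathrm{TV}(p_{m'},q_{m'}),
\]
where all conditionals are at $x$. Taking $\mathbb{E}_{x\sim\xi}$ produces $\eta^{\mathrm{txt}}_{m,m'}$ from the middle term. It then suffices to show, for each $j\in\{m,m'\}$,
\[
\mathbb{E}_{\xi}\mathrm{TV}\bigl(p_j(\cdot\mid x),q_j(\cdot\mid x)\bigr)\leq\sqrt{\kappa_j\varepsilon_j/2}.
\]
Measurability of $x\mapsto\mathrm{TV}(\cdot)$ is needed so these expectations make sense. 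I will assume it as part of the standard regular-conditional-distribution setup.

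\textbf{Per-model bound.} Fix $j$. Pinsker's inequality gives
\[
\mathrm{TV}(p_j,q_j)\leq\sqrt{\tfrac12\mathrm{KL}(p_j\Vert q_j)}
\]
pointwise, and this holds even when the KL is infinite. Apply Jensen to the concave square root under $\xi$:
\[
\mathbb{E}_\xi\mathrm{TV}\leq\sqrt{\tfrac12\mathbb{E}_\xi\mathrm{KL}(p_j\Vert q_j)}.
\]
Next, use Eq.~\eqref{eq:coverage} to change measure. Since $\xi\ll\mathcal{P}_{j,X}$ and the KL is nonnegative,
\[
\mathbb{E}_\xi\mathrm{KL}=\mathbb{E}_{\mathcal{P}_{j,X}}\Bigl[\tfrac{d\xi}{d\mathcal{P}_{j,X}}\,\mathrm{KL}\Bigr]\leq\kappa_j\,\mathbb{E}_{\mathcal{P}_{j,X}}\mathrm{KL}=\kappa_j\varepsilon_j.
\]
The last equality is the identity in Eq.~\eqref{eq:excess-risk}. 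Combining the three terms yields Eq.~\eqref{eq:tv-behavior-bound}.

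\textbf{Main obstacle: the excess-risk identity.} The delicate step is justifying Eq.~\eqref{eq:excess-risk} as an equality of extended reals. Writing $\mathcal{R}_m(q_m)-\mathcal{R}_m(p_m)$ as the expected KL requires $\mathcal{R}_m(p_m)$, the conditional entropy, to be finite, so no $\infty-\infty$ arises. I would sidestep this by taking the right-hand side of Eq.~\eqref{eq:excess-risk}, the expected conditional KL, as the definition of $\varepsilon_m$, which is the form actually used above. Finiteness of $\varepsilon_j$ then makes the KL finite $\mathcal{P}_{j,X}$-a.e., and by absolute continuity also $\xi$-a.e.

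A second point to check is that the change of measure uses only the essential sup of the density ratio. Since $\kappa_j$ bounds $d\xi/d\mathcal{P}_{j,X}$ only $\mathcal{P}_{j,X}$-a.e., the inequality is valid because the integrand is nonnegative. Everything else is a routine chaining of standard inequalities.
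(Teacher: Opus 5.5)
Your proposal is correct and follows the paper's proof essentially step for step: the pointwise triangle inequality for TV through $p_m(\cdot\mid x)$ and $p_{m'}(\cdot\mid x)$, then the per-model bound of Lemma~\ref{lem:risk-tv-transfer}, obtained from Pinsker, Jensen for the square root, and the density-ratio change of measure combined with the cross-entropy/KL identity of Lemma~\ref{lem:cross-entropy-kl}. Your remarks on avoiding $\infty-\infty$ and on $\kappa_j$ being only a $\mathcal{P}_{j,X}$-essential bound refine the same argument and do not change its route.
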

The bound separates deployed-model difference into one discrepancy between the
two effective targets and two learning-error terms. Conditional on these
quantities, the bound does not depend explicitly on architecture,
initialization, or optimizer. Small excess risk alone is insufficient: the
models can remain different when their effective target distributions differ.
The full proof appears in Appendix~\ref{sec:theory_training_behavior}.

Textually different responses may nevertheless be semantically close. Let
$\phi:\mathcal{R}\to\mathbb{R}^d$ be a semantic encoder, and define
$\nu_m^x=\phi_{\#}q_m(\cdot\mid x)$ and
$\pi_m^x=\phi_{\#}p_m(\cdot\mid x)$. For probability measures $\alpha$ and
$\beta$ on $\mathbb{R}^d$ with finite first moments, the 1-Wasserstein distance
with Euclidean ground cost is
\begin{equation}
W_1(\alpha,\beta)=
\inf_{\gamma\in\Pi(\alpha,\beta)}
\int\|u-v\|_2\,d\gamma(u,v),
\label{eq:w1-definition}
\end{equation}
where $\Pi(\alpha,\beta)$ is the set of couplings with marginals $\alpha$ and
$\beta$. Thus, $W_1$ is the smallest expected embedding distance achievable by
jointly aligning the two distributions.

Assume there exists a set $\mathcal{S}_\phi\subset\mathbb{R}^d$ containing the
supports of $\nu_j^x$ and $\pi_j^x$ for $j\in\{m,m'\}$ and $\xi$-almost every
$x$, with
$\operatorname{diam}(\mathcal{S}_\phi)\leq B_\phi$. Define
\[
\eta^{\mathrm{sem}}_{m,m'}=
\mathbb{E}_{x\sim\xi}W_1(\pi_m^x,\pi_{m'}^x).
\]

\begin{corollary}[Semantic behavioral similarity]
\label{cor:semantic_similarity}
Under the assumptions of Theorem~\ref{thm:behavioral_similarity} and the bounded
embedding-diameter condition above,
\begin{equation}
\begin{aligned}
&\mathbb{E}_{x\sim\xi}W_1(\nu_m^x,\nu_{m'}^x)\\
&\quad\leq \eta^{\mathrm{sem}}_{m,m'}
+B_\phi\sqrt{\frac{\kappa_m\varepsilon_m}{2}}
+B_\phi\sqrt{\frac{\kappa_{m'}\varepsilon_{m'}}{2}}.
\end{aligned}
\label{eq:semantic-behavior-bound}
\end{equation}
\end{corollary}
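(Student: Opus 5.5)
The plan is to mirror the proof of Theorem~\ref{thm:behavioral_similarity}, replacing total variation by $W_1$ on the pushforward measures. First I apply the triangle inequality for $W_1$ pointwise in $x$:
\[
W_1(\nu_m^x,\nu_{m'}^x)\leq W_1(\nu_m^x,\pi_m^x)+W_1(\pi_m^x,\pi_{m'}^x)+W_1(\pi_{m'}^x,\nu_{m'}^x).
\]
Taking $\mathbb{E}_{x\sim\xi}$ turns the middle term into $\eta^{\mathrm{sem}}_{m,m'}$. Measurability of $x\mapsto W_1(\cdot,\cdot)$ is treated as a standing assumption, as it is implicitly in the statement. It remains to bound $\mathbb{E}_{\xi}W_1(\nu_j^x,\pi_j^x)$ for each $j\in\{m,m'\}$ by $B_\phi\sqrt{\kappa_j\varepsilon_j/2}$.

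For the key per-prompt step, I would show that for measures supported in a set of diameter $B_\phi$,
\[
W_1(\alpha,\beta)\leq B_\phi\,\mathrm{TV}(\alpha,\beta).
\]
The argument uses the maximal coupling $\gamma$, which places mass $1-\mathrm{TV}(\alpha,\beta)$ on the diagonal. The remaining mass $\mathrm{TV}(\alpha,\beta)$ moves between points of $\mathcal{S}_\phi$, so its cost is at most $B_\phi$ per unit mass. I then combine this with the data-processing inequality $\mathrm{TV}(\phi_\#P,\phi_\#Q)\leq\mathrm{TV}(P,Q)$. Together these give
\[
W_1(\nu_j^x,\pi_j^x)\leq B_\phi\,\mathrm{TV}\bigl(q_j(\cdot\mid x),p_j(\cdot\mid x)\bigr).
\]
The support assumption holds only for $\xi$-a.e.\ $x$, which suffices since we integrate against $\xi$.

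Finally, I reuse the step from the theorem's proof. Pinsker's inequality gives $\mathrm{TV}(p_j,q_j)\leq\sqrt{\mathrm{KL}(p_j\Vert q_j)/2}$. Jensen's inequality for the concave square root gives $\mathbb{E}_\xi\sqrt{\mathrm{KL}/2}\leq\sqrt{\mathbb{E}_\xi\mathrm{KL}/2}$. The change of measure $\mathbb{E}_\xi[\mathrm{KL}]\leq\kappa_j\,\mathbb{E}_{\mathcal{P}_{j,X}}[\mathrm{KL}]=\kappa_j\varepsilon_j$ is valid because $\mathrm{KL}\geq0$ and by Eq.~\eqref{eq:coverage}. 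Hence
\[
\mathbb{E}_\xi W_1(\nu_j^x,\pi_j^x)\leq B_\phi\sqrt{\kappa_j\varepsilon_j/2},
\]
and summing the three terms yields Eq.~\eqref{eq:semantic-behavior-bound}.

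I expect the main obstacle to be the rigor of the $W_1\leq B_\phi\,\mathrm{TV}$ step in general measurable spaces. I would handle it by the explicit coupling
\[
\gamma=(\mathrm{id},\mathrm{id})_\#(\alpha\wedge\beta)+\frac{(\alpha-\beta)_+\otimes(\beta-\alpha)_+}{\mathrm{TV}(\alpha,\beta)},
\]
where the second term is omitted when $\mathrm{TV}=0$. I would then check its marginals directly and bound the cost using $\|u-v\|_2\leq B_\phi$ on $\mathcal{S}_\phi\times\mathcal{S}_\phi$. Finite first moments are automatic from bounded support, so $W_1$ is well defined throughout.
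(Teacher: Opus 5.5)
Your proposal is correct and follows essentially the same route as the paper: the pointwise triangle inequality for $W_1$, the bound $W_1 \leq B_\phi\,\mathrm{TV}$ via a maximal coupling, and then Pinsker, Jensen and the density-ratio change of measure, which together form the paper's Lemma~\ref{lem:risk-tv-transfer}. The only difference is cosmetic. The paper builds the maximal coupling on the countable response space and pushes it forward through $\phi$, whereas you build it directly on the embedding space and then use data processing for TV. The two are equivalent, and yours gives a slightly sharper intermediate bound.
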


\begin{corollary}[Conditional mean-embedding similarity]
\label{cor:mean_embedding_similarity}
Let
\[
\mu_m(x)=\mathbb{E}_{r\sim q_m(\cdot\mid x)}[\phi(r)]
\]
be the conditional mean response embedding. Under the assumptions of
Corollary~\ref{cor:semantic_similarity},
\begin{equation}
\begin{aligned}
&\mathbb{E}_{x\sim\xi}
\|\mu_m(x)-\mu_{m'}(x)\|_2\\
&\quad\leq \eta^{\mathrm{sem}}_{m,m'}
+B_\phi\sqrt{\frac{\kappa_m\varepsilon_m}{2}}
+B_\phi\sqrt{\frac{\kappa_{m'}\varepsilon_{m'}}{2}}.
\end{aligned}
\label{eq:mean-embedding-bound}
\end{equation}
\end{corollary}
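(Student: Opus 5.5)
The plan is to deduce Corollary~\ref{cor:mean_embedding_similarity} directly from Corollary~\ref{cor:semantic_similarity}. The key fact is that the distance between the means of two probability measures on $\mathbb{R}^d$ is bounded by their $1$-Wasserstein distance. Because of this, the right-hand side of Eq.~\eqref{eq:mean-embedding-bound} is exactly the right-hand side of Eq.~\eqref{eq:semantic-behavior-bound}. It therefore suffices to show the pointwise inequality
\[
\|\mu_m(x)-\mu_{m'}(x)\|_2\leq W_1(\nu_m^x,\nu_{m'}^x)
\]
for $\xi$-almost every $x$, and then integrate over $x\sim\xi$.

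First I would check that everything is well defined. By the bounded-diameter assumption, $\nu_m^x$ and $\nu_{m'}^x$ are supported in $\mathcal{S}_\phi$ with $\operatorname{diam}(\mathcal{S}_\phi)\leq B_\phi$. Hence both measures have finite first moments, so $W_1$ in Eq.~\eqref{eq:w1-definition} is finite. The means also exist and are given by the pushforward identity
\[
\mu_m(x)=\mathbb{E}_{r\sim q_m(\cdot\mid x)}[\phi(r)]=\int u\,d\nu_m^x(u),
\]
and similarly for $m'$. Measurability of $x\mapsto\|\mu_m(x)-\mu_{m'}(x)\|_2$ follows from that of the kernels $q_j(\cdot\mid x)$, which is already implicit in the statement of Corollary~\ref{cor:semantic_similarity}.

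Next I would prove the pointwise inequality. Fix any coupling $\gamma\in\Pi(\nu_m^x,\nu_{m'}^x)$. By the marginal constraints,
\[
\mu_m(x)-\mu_{m'}(x)=\int (u-v)\,d\gamma(u,v).
\]
Jensen's inequality for the convex norm (equivalently, the triangle inequality for Bochner integrals) then gives
\[
\|\mu_m(x)-\mu_{m'}(x)\|_2\leq\int\|u-v\|_2\,d\gamma(u,v).
\]
Taking the infimum over $\gamma$ yields $\|\mu_m(x)-\mu_{m'}(x)\|_2\leq W_1(\nu_m^x,\nu_{m'}^x)$. Taking $\mathbb{E}_{x\sim\xi}$ and applying Eq.~\eqref{eq:semantic-behavior-bound} finishes the proof.

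I do not expect any serious obstacle here. The only points needing care are the integrability and measurability bookkeeping. Should a self-contained argument be preferred, the same chain of inequalities can instead be run against the intermediate bounds used in proving Corollary~\ref{cor:semantic_similarity}; the conclusion is unchanged.
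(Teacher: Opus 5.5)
Your proposal is correct and follows the paper's proof: fix a prompt, take any coupling of $\nu_m^x$ and $\nu_{m'}^x$, and apply Jensen's inequality for the norm to get $\|\mu_m(x)-\mu_{m'}(x)\|_2\leq W_1(\nu_m^x,\nu_{m'}^x)$, then average over $x\sim\xi$ and invoke Corollary~\ref{cor:semantic_similarity}. Your well-definedness remarks, that bounded support gives finite first moments and therefore finite means and $W_1$, are a correct refinement that the paper leaves implicit.
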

Corollary~\ref{cor:mean_embedding_similarity} controls the distance between
conditional mean embeddings. The empirical distance in Eq.~\eqref{eq:dmean}
instead uses one generated response from each model and prompt. Its expectation
is bounded by the same right-hand side plus one within-prompt spread term for
each model; Appendix~\ref{sec:theory_training_behavior} states this connection
precisely. Consequently, decreasing excess risks and increasingly similar
effective target distributions are sufficient for relative semantic
homogenization, potentially toward a nonzero floor. These are distributional
results. A guarantee for deterministic greedy decoding requires an additional
stability condition, such as a positive probability margin.

\section{Tree-likeness and Low-rank Structure}
\paragraph{Approximate tree structure.}
We quantify tree-likeness using the four-point defect associated with
Gromov $\delta$-hyperbolicity (definition and qualification in
Appendix~\ref{app:results}). After normalization by matrix diameter, the mean
defect is $0.015$ for both the sentence and token constructions. The maximum
normalized defect is larger ($0.12$--$0.16$), indicating small average but
localized departures from a tree-like geometry. Because the token matrix
averages squared MMD, its four-point value is a descriptive diagnostic rather
than a formal hyperbolicity constant. These localized defects are consistent
with cross-family mixing in the dendrogram but do not establish a particular
evolutionary mechanism.

\paragraph{Low-dimensional disagreement structure.}
The prompt-wise token-MMD tensor admits a compact low-rank approximation: its
leading component explains $63\%$ of prompt-to-prompt variance, and the first
five explain $77\%$. A few axes thus summarize much of how model pairs disagree
across prompts, supporting the PCA summary and suggesting that high-leverage
prompts could yield smaller diagnostic sets, though prompt selection is not
evaluated here.

\section{Discussion and Limitations}
The agreement across constructions indicates that the family structure and decreasing cross-family distances are not specific to one representation or alignment assumption. Behavioral similarity does not, however, imply correctness. Absolute distances depend on the prompt bank, encoder, and decoding configuration, though the reported structure survives the encoder swaps above. With one response per model--prompt pair, the distances also carry generation variability; release-date trends are observational, GW uses a 256-prompt subsample, and the model panel is not exhaustive. Finally, the training-side quantities are unobserved here, so the theory supplies a sufficient scenario rather than an empirical explanation.

\section{Conclusion}
We presented a label-free framework for mapping LLM behavior from responses to a shared prompt bank. Across sentence- and token-level constructions it reveals family coherence, outliers, decreasing cross-family distances, and compact response clouds for recent models. We also gave an architecture-agnostic sufficient condition for behavioral similarity that, with its training-side quantities unobserved, complements rather than explains the findings. Output geometry augments leaderboards when weights, activations, or labels are unavailable.

\bibliography{references}

\onecolumn

\appendix

\section{Supplementary Material}
\label{app:supp}

\subsection{Additional Proofs and Derivations}
\label{app:proofs}

\paragraph{Full proof of Proposition~\ref{prop:pseudometric}.}

\begin{proposition}
\label{prop:pseudometric}
The mean per-prompt distance in Eq.~\eqref{eq:dmean} is a pseudometric on the
observed model representations $\{E_m\}_{m=1}^{M}$ and a metric on the quotient
space induced by equality of all encoded responses.
\end{proposition}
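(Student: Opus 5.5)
The plan is to check the pseudometric axioms directly from Eq.~\eqref{eq:dmean}, treating $D^{\mathrm{mean}}$ as a function of the embedding matrices $E_m,E_{m'}\in\mathbb{R}^{N\times 4096}$. The key observation is that $D^{\mathrm{mean}}_{m,m'}=\frac{1}{N}\sum_{i}\lVert E_m[i]-E_{m'}[i]\rVert_2$ is exactly the mixed $\ell_{2,1}$ norm, scaled by $1/N$, of the difference $E_m-E_{m'}$. Here the $\ell_{2,1}$ norm means the sum over rows of the row Euclidean norms. Every axiom will therefore be inherited row by row from the Euclidean norm on $\mathbb{R}^{4096}$.

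\textbf{Pseudometric axioms.} I will verify them in order.
\begin{enumerate}
\item Nonnegativity holds because each summand is a norm.
\item $D^{\mathrm{mean}}_{m,m}=0$ holds because every summand $\lVert E_m[i]-E_m[i]\rVert_2$ vanishes.
\item Symmetry follows from $\lVert a-b\rVert_2=\lVert b-a\rVert_2$ applied to each prompt $i$.
\item For the triangle inequality, take a third model $m''$. For each $i$, insert $E_{m''}[i]$ and apply the Euclidean triangle inequality:
\[
\lVert E_m[i]-E_{m'}[i]\rVert_2\le \lVert E_m[i]-E_{m''}[i]\rVert_2+\lVert E_{m''}[i]-E_{m'}[i]\rVert_2 .
\]
Averaging over $i$ with the nonnegative weights $1/N$ preserves the inequality, which gives $D^{\mathrm{mean}}_{m,m'}\le D^{\mathrm{mean}}_{m,m''}+D^{\mathrm{mean}}_{m'',m'}$.
\end{enumerate}

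\textbf{Why it is not a metric.} Distinct models $m\neq m'$ may produce identical encoded responses, i.e.\ $E_m=E_{m'}$, and then their distance is $0$. The definition is therefore only a pseudometric on model labels.

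\textbf{Metric on the quotient.} Define $m\sim m'$ iff $E_m[i]=E_{m'}[i]$ for all $i$, equivalently $E_m=E_{m'}$. I will show that the following three conditions are equivalent:
\begin{enumerate}
\item $D^{\mathrm{mean}}_{m,m'}=0$;
\item each of the $N$ nonnegative summands is zero;
\item $E_m[i]=E_{m'}[i]$ for every $i$, by definiteness of the Euclidean norm.
\end{enumerate}
Two checks then complete the argument. First, $D^{\mathrm{mean}}$ is well defined on equivalence classes: if $m\sim\tilde m$, the triangle inequality gives $|D^{\mathrm{mean}}_{m,m'}-D^{\mathrm{mean}}_{\tilde m,m'}|\le D^{\mathrm{mean}}_{m,\tilde m}=0$. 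More simply, the value depends only on the matrices. Second, the induced function on classes satisfies the axioms above together with definiteness, so it is a metric. This is the standard Kolmogorov quotient of a pseudometric space.

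The only delicate point is bookkeeping rather than mathematics. One must be clear that the objects are the observed matrices $E_m$, not the underlying stochastic models, and that the quotient identifies equality of all $N$ encoded responses. Once that is fixed, no step is a real obstacle. Well-definedness on the quotient is the one place where the triangle inequality is reused, and I would state it explicitly.
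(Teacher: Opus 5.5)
Your proposal is correct and follows essentially the same route as the paper: you inherit nonnegativity, symmetry, and the triangle inequality prompt by prompt from the Euclidean norm and average over $i$, then note that $D^{\mathrm{mean}}_{m,m'}=0$ iff $E_m[i]=E_{m'}[i]$ for all $i$, so the metric axioms hold on the quotient. Your explicit check that $D^{\mathrm{mean}}$ is well defined on equivalence classes, and the $\ell_{2,1}$-norm reading, are minor additions. The paper leaves the well-definedness check implicit.
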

The proof is given in Appendix~\ref{app:proofs}.

For each prompt $i$, define
$d_i(m,m')=\|E_m[i]-E_{m'}[i]\|_2$. Then
$D^{\mathrm{mean}}_{m,m'}=N^{-1}\sum_{i=1}^{N}d_i(m,m')$.

\begin{proof}
For every prompt $i$, Euclidean distance implies nonnegativity, symmetry, and
\[
d_i(m,m')\leq d_i(m,\ell)+d_i(\ell,m')
\]
for any third model $\ell$. Averaging these pointwise inequalities over
$i=1,\ldots,N$ preserves all three properties, and
$D^{\mathrm{mean}}_{m,m}=0$. Hence $D^{\mathrm{mean}}$ is a pseudometric.

It may fail identity of indiscernibles at the model level. Specifically,
$D^{\mathrm{mean}}_{m,m'}=0$ if and only if
$E_m[i]=E_{m'}[i]$ for every prompt $i$. Distinct underlying models can satisfy
this equality if they produce encoder-equivalent responses on the evaluated
prompts. After quotienting by the relation
$m\sim m'\iff E_m[i]=E_{m'}[i]$ for all $i$, identity of indiscernibles holds,
so the induced function is a metric on the quotient space.
\end{proof}

\paragraph{Token MMD for single-token responses.}
Let
$\widehat{\lambda}_X=T^{-1}\sum_{a=1}^{T}\delta_{x_a}$ and
$\widehat{\lambda}_Y=S^{-1}\sum_{b=1}^{S}\delta_{y_b}$ be the empirical token
measures of two responses. The plug-in squared MMD is
\begin{equation}
\widehat{\mathrm{MMD}}^2(X,Y)=
\frac{1}{T^2}\sum_{a,a'}k(x_a,x_{a'})
+\frac{1}{S^2}\sum_{b,b'}k(y_b,y_{b'})
-\frac{2}{TS}\sum_{a,b}k(x_a,y_b).
\label{eq:plugin}
\end{equation}
For single-token responses, $T=S=1$ and $X=\{x\}$, $Y=\{y\}$. With the RBF
kernel $k(u,v)=\exp(-\|u-v\|_2^2/(2\sigma^2))$, Eq.~\eqref{eq:plugin} becomes
\begin{equation}
\widehat{\mathrm{MMD}}^2(X,Y)
=2\left(1-e^{-\|x-y\|_2^2/(2\sigma^2)}\right).
\label{eq:single}
\end{equation}
It is zero when $x=y$, increases monotonically with $\|x-y\|_2$, and approaches
$2$ as the distance diverges. Thus, a one-token response is handled naturally
as a one-point empirical measure. Equation~\eqref{eq:plugin} is exact for the
two observed empirical measures; if interpreted as an estimator of an
underlying population-token MMD, it remains the biased plug-in estimator, with
bias that can be substantial for short responses.

\paragraph{Relationships among the comparison constructions.}
The constructions are related, but they are not all metrics of the same form.
The mean sentence-level construction averages Euclidean distances under a fixed
prompt alignment and is a pseudometric by
Proposition~\ref{prop:pseudometric}. With a characteristic kernel, MMD is a
metric on probability measures \citep{gretton2012mmd}; our token entries use
its square, which is a dissimilarity whose square root is the corresponding
metric. The PCA construction is instead a rank-one compression of the vector of
aligned prompt-wise distances and need not satisfy metric axioms. GW replaces
the fixed prompt alignment by an optimized coupling between internal cost
matrices. When the within-space costs are metrics, the square root of the standard
GW objective is a metric on metric-measure spaces modulo measure-preserving
isometry \citep{memoli2011gw}. Our within-model costs are cosine
dissimilarities, which need not satisfy the triangle inequality; the appropriate
object is therefore a measure-network discrepancy rather than a formal metric.
These costs are nevertheless invariant to orthogonal transformations and
positive rescaling of the embedding vectors.

Accordingly, the four analyses vary along three axes: representation (pooled
response embeddings or token empirical measures), aggregation (mean or PCA),
and alignment (fixed prompts or optimized coupling). The empirical correlations
in Table~\ref{tab:agree-full} show that the resulting rankings are similar, but
they do not imply mathematical equivalence of the constructions.

\subsection{Additional Experimental Details}
\label{app:details}

\paragraph{Model roster.}
Table~\ref{tab:roster} lists all 32 models with the release dates used as the
time axis; the figures abbreviate these identifiers. Dates marked
$^{*}$ are approximate (post-cutoff or unannounced at the time of the runs);
they affect only the horizontal placement of points, never the distances.

\begin{table*}[t]
\centering
\footnotesize
\begin{tabular}{lll}
\hline
Family & Model ID & Release date\\
\hline
GPT & \texttt{gpt-2} & 2019-02-14\\
 & \texttt{gpt-3.5-turbo} & 2022-11-30\\
 & \texttt{gpt-4o-mini} & 2024-07-18\\
 & \texttt{gpt-4.1-mini} & 2025-04-14\\
 & \texttt{gpt-5.2} & 2025-12-01$^{*}$\\
\hline
Claude & \texttt{claude-sonnet-4-0} & 2025-05-22\\
 & \texttt{claude-opus-4-0} & 2025-05-22\\
 & \texttt{claude-sonnet-4-5-think} & 2025-09-29$^{*}$\\
 & \texttt{claude-haiku-4-5} & 2025-10-01$^{*}$\\
 & \texttt{claude-opus-4-5-think} & 2025-11-01$^{*}$\\
 & \texttt{claude-sonnet-4-6} & 2025-12-01$^{*}$\\
 & \texttt{claude-opus-4-6} & 2026-01-01$^{*}$\\
 & \texttt{claude-opus-4-7-think} & 2026-03-01$^{*}$\\
\hline
Gemini & \texttt{gemini-2.0-flash} & 2024-12-11\\
 & \texttt{gemini-2.5-pro} & 2025-03-25\\
 & \texttt{gemini-3-flash-preview} & 2025-11-01$^{*}$\\
 & \texttt{gemini-3.1-pro-preview} & 2026-02-01$^{*}$\\
\hline
Qwen & \texttt{qwen/qwen-2.5-72b-instruct} & 2024-09-19\\
 & \texttt{qwen3-235b-a22b-instruct-2507} & 2025-07-21\\
 & \texttt{qwen/qwen3.5-397b-a17b} & 2025-10-01$^{*}$\\
 & \texttt{qwen/qwen3.6-max-preview} & 2025-12-01$^{*}$\\
 & \texttt{qwen3.7-max} & 2026-02-01$^{*}$\\
\hline
Llama & \texttt{meta-llama/llama-3-8b-instruct} & 2024-04-18\\
 & \texttt{meta-llama/llama-3.1-70b-instruct} & 2024-07-23\\
 & \texttt{meta-llama/llama-3.3-70b-instruct} & 2024-12-06\\
 & \texttt{meta-llama/llama-4-scout} & 2025-04-05\\
 & \texttt{meta-llama/llama-4-maverick} & 2025-04-05\\
\hline
Mistral & \texttt{mistralai/mixtral-8x22b-instruct} & 2024-04-17\\
 & \texttt{mistralai/mistral-nemo} & 2024-07-18\\
 & \texttt{mistralai/mistral-small-24b-instruct-2501} & 2025-01-30\\
 & \texttt{mistralai/mistral-medium-3.1} & 2025-08-01$^{*}$\\
 & \texttt{mistralai/mistral-large-2512} & 2025-12-01$^{*}$\\
\hline
\end{tabular}
\caption{The full 32-model roster: model identifier and the release date used
on the time axis. Figures abbreviate these identifiers. $^{*}$:
approximate date (post-cutoff or unannounced at the time of the runs);
approximate dates affect only horizontal placement, never the distances.}
\label{tab:roster}
\end{table*}

\paragraph{Question bank composition.}
Every model answers the same canonical bank of $N{=}10{,}000$ prompts, drawn as
a stratified sample from 13 public benchmarks: MMLU \citep{hendrycks2021mmlu},
ARC \citep{clark2018arc}, BIG-Bench \citep{srivastava2023beyond}, HellaSwag
\citep{zellers2019hellaswag}, WinoGrande \citep{sakaguchi2020winogrande}, MATH
\citep{hendrycks2021math}, GSM8K \citep{cobbe2021gsm8k}, SQuAD~2.0
\citep{rajpurkar2018squad2}, TriviaQA \citep{joshi2017triviaqa}, Natural
Questions \citep{kwiatkowski2019nq}, TruthfulQA \citep{lin2022truthfulqa},
HumanEval \citep{chen2021humaneval}, and MBPP \citep{austin2021mbpp}.
Table~\ref{tab:benchmarks} gives the per-benchmark counts. The multiple-choice and short-answer benchmarks are
sampled to a common target ($\approx$876 each); the two code benchmarks
(\texttt{HumanEval}, \texttt{MBPP}) contribute their smaller available size.
Response index $i$ refers to the same prompt for every model. The bank stores
prompts only, with no gold labels, consistent with the label-free design.

\begin{table}[t]
\centering
\small
\begin{tabular}{lr}
\hline
Benchmark & \# prompts\\
\hline
BIG-Bench          & 877\\
ARC                & 877\\
MMLU               & 876\\
HellaSwag          & 876\\
WinoGrande         & 876\\
MATH               & 876\\
GSM8K              & 876\\
SQuAD 2.0          & 876\\
TriviaQA           & 876\\
Natural Questions  & 876\\
TruthfulQA         & 817\\
MBPP               & 257\\
HumanEval          & 164\\
\hline
Total              & 10{,}000\\
\hline
\end{tabular}
\caption{Composition of the canonical 10k question bank by source benchmark.}
\label{tab:benchmarks}
\end{table}

\paragraph{Response embeddings.}
Responses are encoded with \texttt{Qwen3-Embedding-8B}. Each response is read as
at most $1024$ tokens (truncation limit on the input \emph{sequence length});
the encoder returns one hidden vector per token in its native $4096$-dimensional
output space (the model's hidden width, fixed by architecture and independent of
response length), and we mean-pool over the token positions into a single
$4096$-dimensional vector per response. Thus a length-$T$ response ($T\le 1024$)
maps a $(T,4096)$ array of token embeddings to one $4096$-vector; the two numbers
are different axes---$1024$ counts input tokens, $4096$ is the output width. We
use no $L_2$ normalization and an encoder batch size of $8$, giving one matrix
$E_m\in\mathbb{R}^{10000\times 4096}$ per model, row-aligned across models by
prompt.

\paragraph{Cross-model dissimilarities.}
The mean per-prompt distance and PCA-compressed scores are computed directly from
$\{E_m\}$ using Euclidean per-prompt distances. For Gromov--Wasserstein we build,
per model, a cosine response-by-response cost matrix over a common seeded
subsample of $256$ responses (seed $0$), assign uniform marginals, and solve
GW$^2$ with the conditional-gradient solver of POT. The token-level MMD uses the
same encoder with at most $256$ tokens per response, an RBF kernel with a
global-median bandwidth (estimated from a sample of responses), and the biased
(plug-in) estimator applied uniformly to all response lengths; the per-prompt
$32\times 32$ matrices are then averaged over prompts.

\paragraph{Projections.}
2D embeddings are computed from each precomputed $32\times 32$ distance matrix
with metric MDS (SMACOF, \texttt{max\_iter}$=300$, \texttt{n\_init}$=4$), t-SNE
(\texttt{perplexity}$=5$), and UMAP (\texttt{n\_neighbors}$=5$,
\texttt{min\_dist}$=0.1$). All three use \texttt{metric=precomputed} and seed
$0$. The per-family panels reuse the overview's 2D coordinates and share one ring
center.

\paragraph{Software and compute.}
Response generation is routed through OpenRouter, except the local \texttt{gpt-2}
baseline, which runs on-device; each model produces one response per prompt under
a single decoding configuration. Embeddings are produced on a single local GPU,
while all distances, projections, and fits run on CPU with NumPy, scikit-learn,
and POT. Per-response embeddings are cached, so any distance can be recomputed
without re-encoding.

\subsection{Additional Results}
\label{app:results}

\paragraph{Full agreement among distance constructions.}
Table~\ref{tab:agree} in the main text reports each construction's Spearman
correlation with the mean metric. Table~\ref{tab:agree-full} gives the complete
pairwise matrix over the $\binom{32}{2}{=}496$ distinct off-diagonal model pairs.
Beyond the mean-metric column (which reproduces Table~\ref{tab:agree}), two
patterns stand out: the token-level MMD is the closest match to the sentence mean
metric ($\rho{=}0.98$) and also tracks the PCA-compressed variant tightly
($\rho{=}0.95$), while Gromov--Wasserstein is the most distinct construction,
correlating no higher than $0.83$ with any other. This is consistent with GW discarding fixed prompt alignment and absolute
embedding coordinates, and therefore capturing a different aspect of response
geometry.

\begin{table}[t]
\centering
\small
\begin{tabular}{lcccc}
\hline
 & Mean & PCA & GW & Token\\
\hline
Mean  & $1.00$ & $.92$  & $.77$  & $.98$\\
PCA   &        & $1.00$ & $.80$  & $.95$\\
GW    &        &        & $1.00$ & $.83$\\
Token &        &        &        & $1.00$\\
\hline
\end{tabular}
\caption{Spearman correlation between distance constructions over the $496$
distinct off-diagonal model pairs (upper triangle). The Mean column matches
Table~\ref{tab:agree}.}
\label{tab:agree-full}
\end{table}

\paragraph{Structure tests.}
For a quadruple $(a,b,c,d)$, let $S_1,S_2,S_3$ be the three sums
$D_{ab}+D_{cd}$, $D_{ac}+D_{bd}$, and $D_{ad}+D_{bc}$, ordered so that
$S_{(1)}\leq S_{(2)}\leq S_{(3)}$. We define its four-point defect as
\[
\delta(a,b,c,d)=\frac{S_{(3)}-S_{(2)}}{2}.
\]
For a finite metric, every defect is zero exactly when the metric is
$0$-hyperbolic (equivalently, tree-additive on the observed points). Small
normalized defects therefore quantify approximate tree-likeness when the input
is a metric. We report the mean and maximum defect over evaluated quadruples,
divided by matrix diameter. The sentence mean is a metric on its quotient space;
the token matrix averages squared MMD values and need not satisfy the triangle
inequality, so its defect is interpreted only as a descriptive tree-likeness
diagnostic. Table~\ref{tab:hyp} shows a small mean normalized defect
($\approx0.015$) for both constructions and a larger maximum
($0.12$--$0.16$), indicating localized departures from tree structure.

The prompt-wise token-MMD tensor also admits a compact low-rank approximation:
the leading component explains $63\%$ of prompt-to-prompt variance, increasing
to $69\%$, $72\%$, $77\%$, and $85\%$ at ranks $2$, $3$, $5$, and $10$,
respectively. The rank-one approximation is the PCA summary used in the main
text.

\begin{table}[t]
\centering
\small
\begin{tabular}{lcc}
\hline
 & Sentence mean & Token MMD\\
\hline
$\bar\delta/\text{diam}$ (mean rel.)      & $0.015$ & $0.015$\\
$\delta_{\max}/\text{diam}$ (max rel.)    & $0.125$ & $0.155$\\
diameter                                  & $111.8$ & $0.317$\\
\hline
\end{tabular}
\caption{Normalized four-point defects. For the sentence mean, these are hyperbolicity statistics on the quotient metric; for the averaged squared-MMD matrix, they are descriptive tree-likeness diagnostics.}
\label{tab:hyp}
\end{table}

\paragraph{Encoder robustness.}
All constructions in the main text share one response encoder, so their mutual
agreement tests aggregation and alignment but not the choice of $\phi$. We
therefore re-encoded all $32\times10^{4}$ responses with three additional
encoders and recomputed $D^{\mathrm{mean}}$ end to end. The alternatives span
architecture, scale, and provenance: \texttt{bge-m3} and
\texttt{multilingual-e5-large} are XLM-RoBERTa encoders ($568$M and $560$M
parameters, $1024$ dimensions), and \texttt{all-mpnet-base-v2} is a $110$M
MPNet sentence encoder ($768$ dimensions), roughly $73\times$ smaller than the
$8$B, $4096$-dimensional Qwen3-Embedding-8B. None of the three shares a vendor
with any evaluated model, which also addresses the concern that a Qwen encoder
might favor Qwen models.

Table~\ref{tab:encoders} reports the outcome. Rank agreement with the reference
encoder is high ($\rho=0.92$--$0.93$ over the $496$ distinct pairs), the
same-family nearest-neighbor count is essentially unchanged ($19$--$22$ of $32$
against $21$), and \emph{every} encoder returns \texttt{gpt-2} and
\texttt{qwen-2.5-72b-instruct} as the two global outliers---so the Qwen-family
outlier is recovered by encoders with no Qwen lineage. The convergence trend is
likewise stable: its slope changes with the absolute distance scale, which is
encoder-specific, but its correlation with release date is nearly invariant
($r$ between $-0.52$ and $-0.55$, one-sided $p\le0.0011$ throughout).
Figure~\ref{fig:encoders} shows the corresponding maps. Because metric MDS is
determined only up to rotation, reflection, and scale, each map is
Procrustes-aligned to the reference before plotting; the aligned
configurations are visually near-identical.

Two honest qualifications. First, \texttt{multilingual-e5-large} and
\texttt{all-mpnet-base-v2} accept at most $512$ tokens rather than $1024$;
sampling $2{,}400$ responses gives a median length of $80$ tokens with $4.5\%$
above $512$, so a small minority of long responses is truncated more
aggressively, which can only depress agreement. \texttt{bge-m3} runs at the
reference length of $1024$ and is thus the exactly matched comparison. Second,
the three alternatives agree with one another ($\rho=0.99$) more closely than
any of them agrees with Qwen3-Embedding-8B, which is plausibly the one
decoder-based, $4096$-dimensional encoder in the panel; the reported findings
nonetheless hold under all four.

\begin{table}[t]
\centering
\small
\begin{tabular}{lccccc}
\hline
Encoder & Dim & Len & $\rho$ & same-fam.\ NN & trend $r$\\
\hline
Qwen3-Embedding-8B (ref.)      & $4096$ & $1024$ & ---     & $21/32$ & $-0.522$\\
\texttt{bge-m3}                & $1024$ & $1024$ & $0.933$ & $21/32$ & $-0.521$\\
\texttt{multilingual-e5-large} & $1024$ & $512$  & $0.922$ & $22/32$ & $-0.551$\\
\texttt{all-mpnet-base-v2}     & $768$  & $512$  & $0.930$ & $19/32$ & $-0.522$\\
\hline
\end{tabular}
\caption{Encoder robustness. $\rho$ is the Spearman correlation of
$D^{\mathrm{mean}}$ with the reference encoder over the $496$ distinct model
pairs; ``same-fam.\ NN'' counts models whose nearest neighbor is from their own
family; ``trend $r$'' is the correlation between mean cross-family distance and
release date (negative $=$ converging; one-sided $p\le0.0011$ in every row).
All four encoders return \texttt{gpt-2} and \texttt{qwen-2.5-72b-instruct} as
the two global outliers.}
\label{tab:encoders}
\end{table}

\begin{figure*}[t]
\centering
\includegraphics[width=\textwidth]{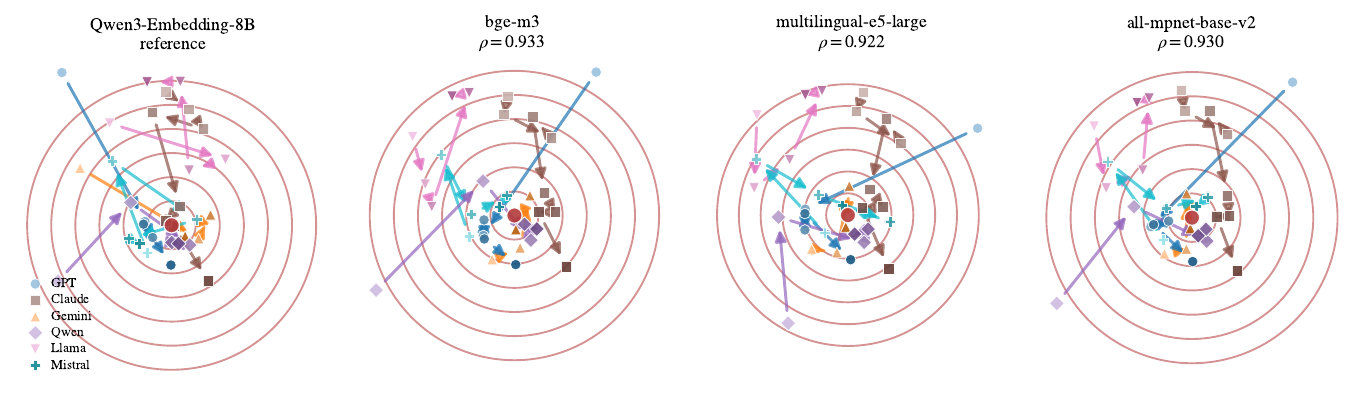}
\caption{Metric-MDS maps of $D^{\mathrm{mean}}$ under four response encoders,
drawn with the encoding of Fig.~\ref{fig:mean}: family markers shaded
light-to-dark by release order, oldest$\rightarrow$latest arrows, and
concentric rings about the latest-models centroid. Each map is
Procrustes-aligned to the reference (left), since MDS is determined only up to
rotation, reflection, and scale; the rings are recomputed per panel from that
encoder's own coordinates, so their agreement reflects the data rather than a
reused reference geometry. The configurations are near-identical---the
\texttt{gpt-2} outlier, the family blocks, and the inward evolution
paths---under an $8$B decoder-based encoder and a $110$M MPNet encoder alike.}
\label{fig:encoders}
\end{figure*}

\paragraph{Supplementary figures.}
Figure~\ref{fig:mean_panels_full} completes Fig.~\ref{fig:mean_panels} with the Llama and Mistral
panels; Fig.~\ref{fig:gw} maps the ecosystem under the alignment-invariant GW distance;
Figs.~\ref{fig:gw_panels} and \ref{fig:tok_panels} give the per-family detail for the GW and
token-level cartography maps; Figs.~\ref{fig:proj} and \ref{fig:pca}
show the projection and PCA robustness maps; and Figs.~\ref{fig:tok_struct} and \ref{fig:tok_conv}
the token-level structure and convergence.

\begin{figure*}[t]
\centering
\begin{minipage}{0.26\textwidth}\centering
\includegraphics[width=\linewidth]{map_sentence_mean_panel_gpt.pdf}
\end{minipage}\hspace{0.02\textwidth}
\begin{minipage}{0.26\textwidth}\centering
\includegraphics[width=\linewidth]{map_sentence_mean_panel_claude.pdf}
\end{minipage}\hspace{0.02\textwidth}
\begin{minipage}{0.26\textwidth}\centering
\includegraphics[width=\linewidth]{map_sentence_mean_panel_gemini.pdf}
\end{minipage}\\[3pt]
\begin{minipage}{0.26\textwidth}\centering
\includegraphics[width=\linewidth]{map_sentence_mean_panel_qwen.pdf}
\end{minipage}\hspace{0.02\textwidth}
\begin{minipage}{0.26\textwidth}\centering
\includegraphics[width=\linewidth]{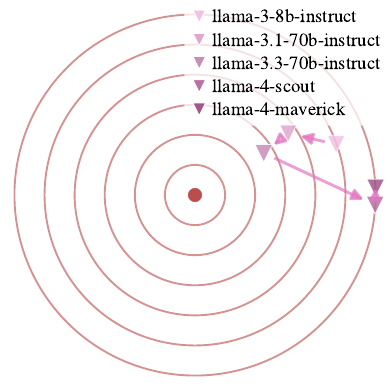}
\end{minipage}\hspace{0.02\textwidth}
\begin{minipage}{0.26\textwidth}\centering
\includegraphics[width=\linewidth]{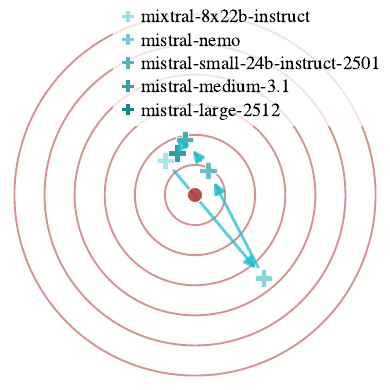}
\end{minipage}
\caption{All six per-family panels for Fig.~\ref{fig:mean} in a common ring-bounded frame
(identical axes) with oldest$\rightarrow$latest arrows. Left to right, top to bottom: GPT,
Claude, Gemini, Qwen, Llama, Mistral. Fig.~\ref{fig:mean_panels} shows the first four in the
main text.}
\label{fig:mean_panels_full}
\end{figure*}

\begin{figure}[t]
\centering
\includegraphics[width=0.55\textwidth]{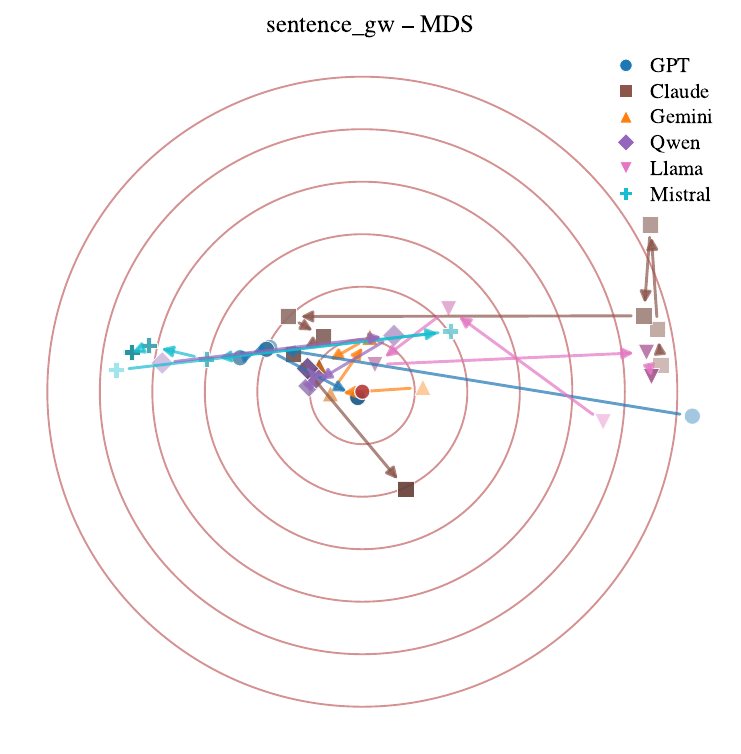}
\caption{Alignment-invariant cartography from Gromov--Wasserstein distance (encoding as
Fig.~\ref{fig:mean}). GW compares each model's \emph{internal} response geometry, so frontier
models pair across families and the family signal weakens. Detail: Fig.~\ref{fig:gw_panels}.}
\label{fig:gw}
\end{figure}

\begin{figure*}[t]
\centering
\begin{minipage}{0.26\textwidth}\centering
\includegraphics[width=\linewidth]{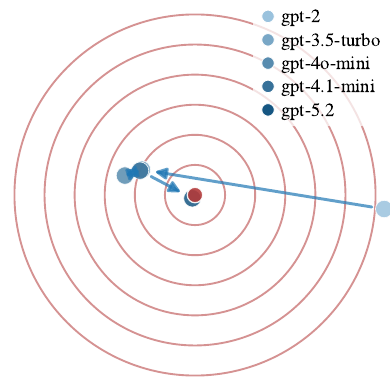}
\end{minipage}\hspace{0.02\textwidth}
\begin{minipage}{0.26\textwidth}\centering
\includegraphics[width=\linewidth]{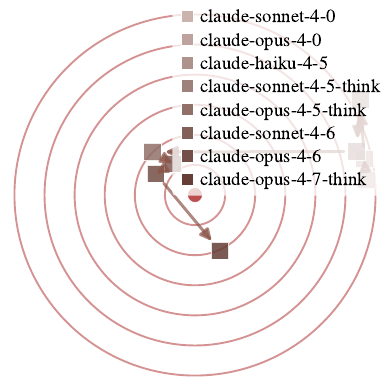}
\end{minipage}\hspace{0.02\textwidth}
\begin{minipage}{0.26\textwidth}\centering
\includegraphics[width=\linewidth]{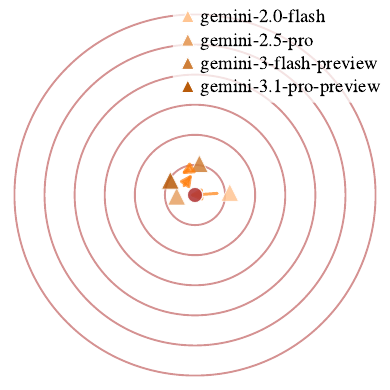}
\end{minipage}\\[3pt]
\begin{minipage}{0.26\textwidth}\centering
\includegraphics[width=\linewidth]{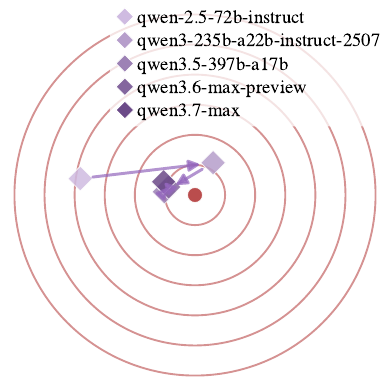}
\end{minipage}\hspace{0.02\textwidth}
\begin{minipage}{0.26\textwidth}\centering
\includegraphics[width=\linewidth]{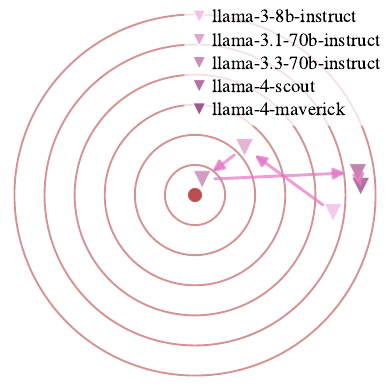}
\end{minipage}\hspace{0.02\textwidth}
\begin{minipage}{0.26\textwidth}\centering
\includegraphics[width=\linewidth]{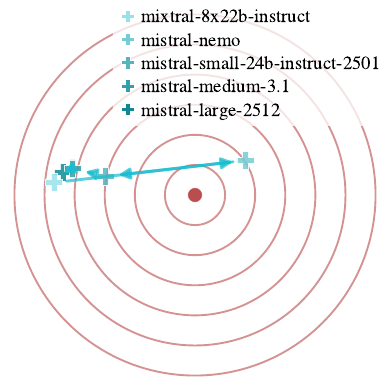}
\end{minipage}
\caption{Per-family detail for Fig.~\ref{fig:gw}: each lineage under Gromov--Wasserstein distance,
drawn in a common frame bounded by the outermost ring (identical axes in every panel). Panels, left
to right and top to bottom: GPT, Claude, Gemini, Qwen, Llama, Mistral.}
\label{fig:gw_panels}
\end{figure*}

\begin{figure*}[t]
\centering
\begin{minipage}{0.26\textwidth}\centering
\includegraphics[width=\linewidth]{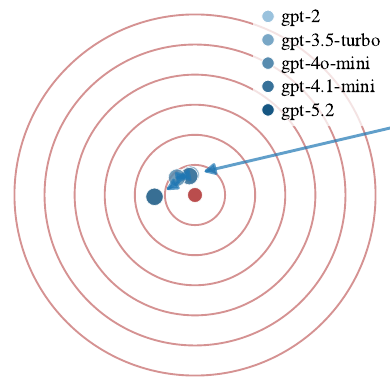}
\end{minipage}\hspace{0.02\textwidth}
\begin{minipage}{0.26\textwidth}\centering
\includegraphics[width=\linewidth]{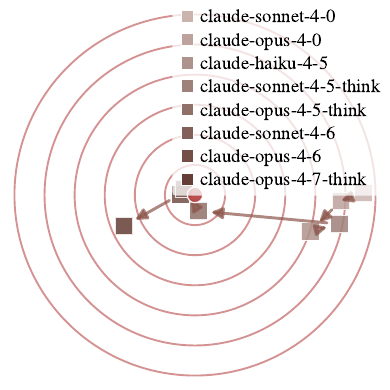}
\end{minipage}\hspace{0.02\textwidth}
\begin{minipage}{0.26\textwidth}\centering
\includegraphics[width=\linewidth]{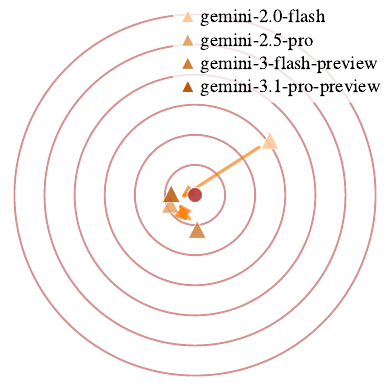}
\end{minipage}\\[3pt]
\begin{minipage}{0.26\textwidth}\centering
\includegraphics[width=\linewidth]{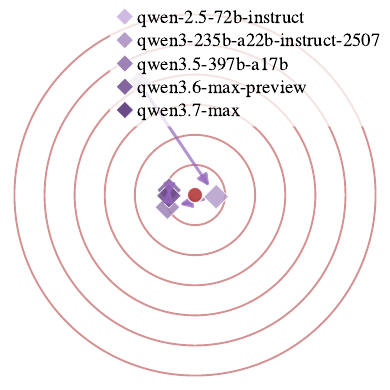}
\end{minipage}\hspace{0.02\textwidth}
\begin{minipage}{0.26\textwidth}\centering
\includegraphics[width=\linewidth]{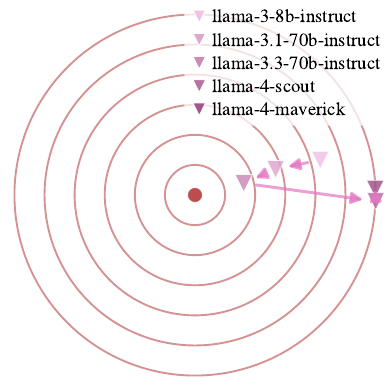}
\end{minipage}\hspace{0.02\textwidth}
\begin{minipage}{0.26\textwidth}\centering
\includegraphics[width=\linewidth]{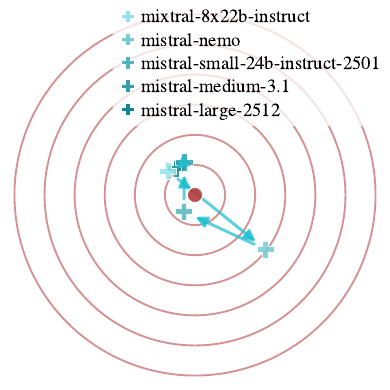}
\end{minipage}
\caption{Per-family detail for Fig.~\ref{fig:tok_map}: each lineage under the token-level per-prompt
MMD distance, drawn in a common frame bounded by the outermost ring (identical axes in every
panel). Panels, left to right and top to bottom: GPT, Claude, Gemini, Qwen, Llama, Mistral.}
\label{fig:tok_panels}
\end{figure*}

\begin{figure*}[t]
\centering
\begin{minipage}{0.49\textwidth}\centering
\includegraphics[width=\textwidth]{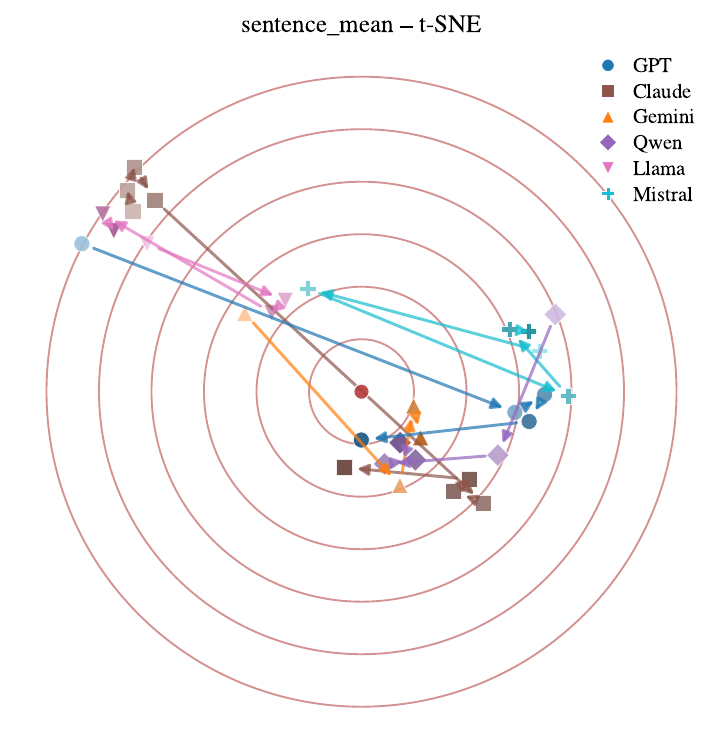}\\[-1pt]
{\small (a) t-SNE}
\end{minipage}\hfill
\begin{minipage}{0.49\textwidth}\centering
\includegraphics[width=\textwidth]{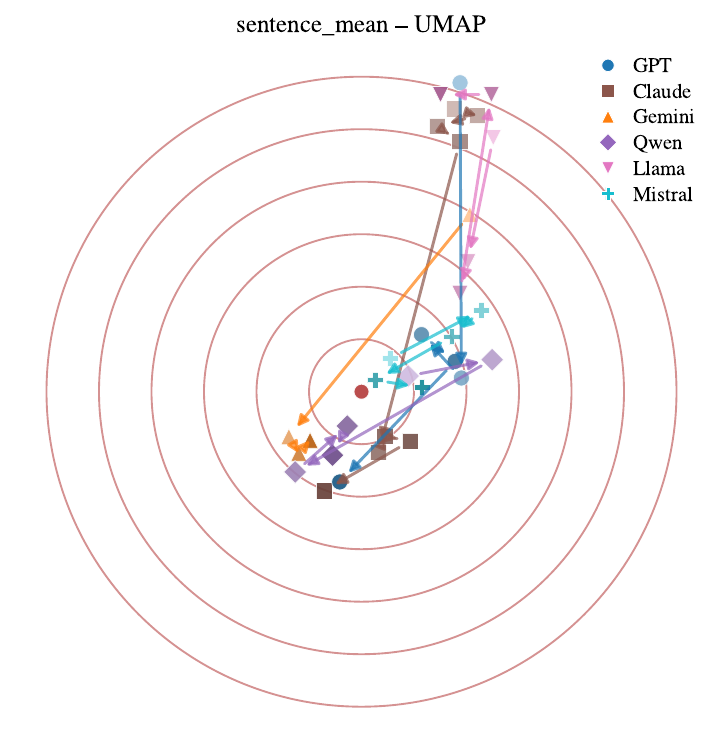}\\[-1pt]
{\small (b) UMAP}
\end{minipage}
\caption{The same mean-metric distances under t-SNE (a) and UMAP (b), with the encoding of
Fig.~\ref{fig:mean}. Family grouping is stable across projections.}
\label{fig:proj}
\end{figure*}

\begin{figure}[t]
\centering
\includegraphics[width=0.55\textwidth]{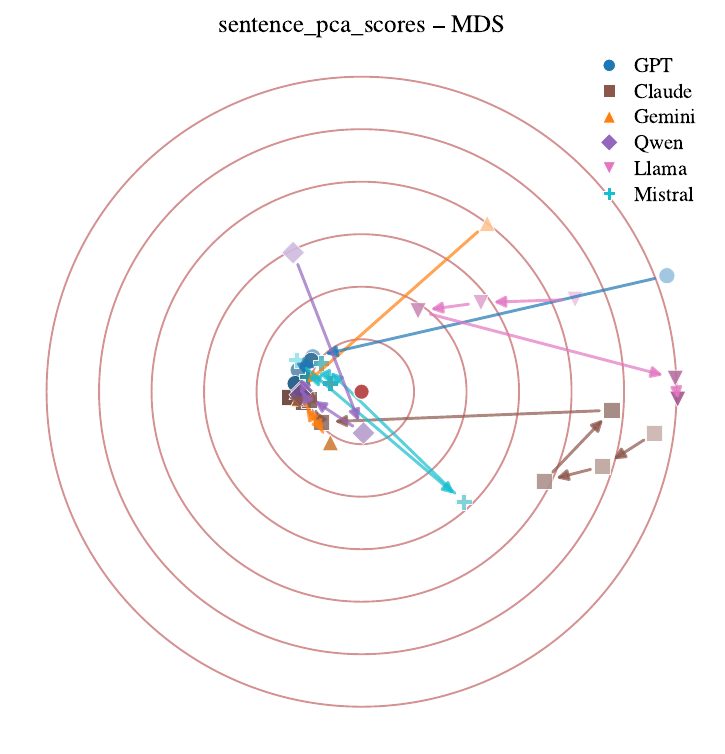}
\caption{The PCA-compressed distance of Eq.~\eqref{eq:dpca} (metric MDS), with the encoding of
Fig.~\ref{fig:mean}. The compression closely matches the mean metric (Spearman $\rho=0.92$).}
\label{fig:pca}
\end{figure}

\begin{figure*}[t]
\centering
\includegraphics[width=0.72\textwidth]{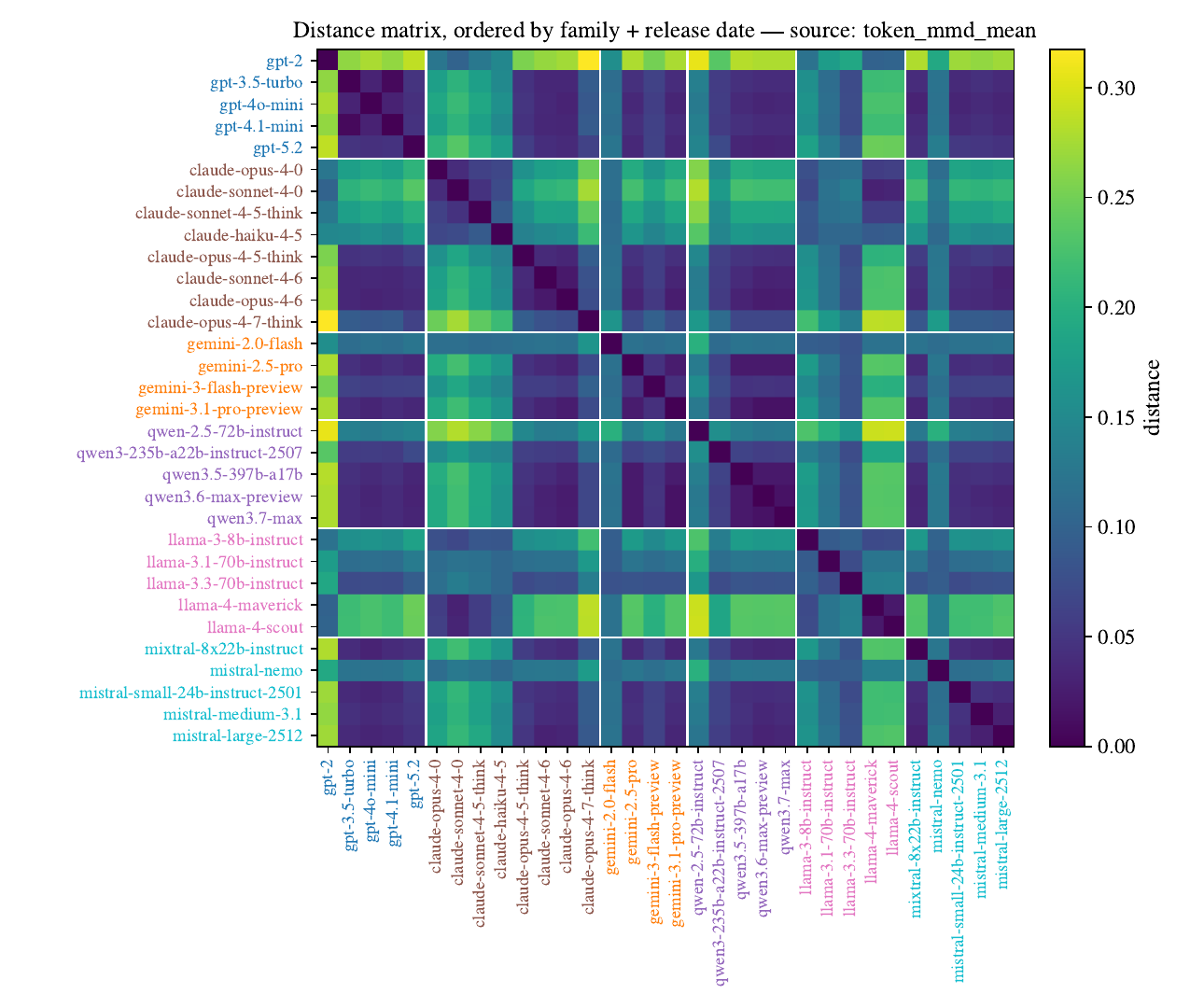}\\[4pt]
\includegraphics[width=0.92\textwidth]{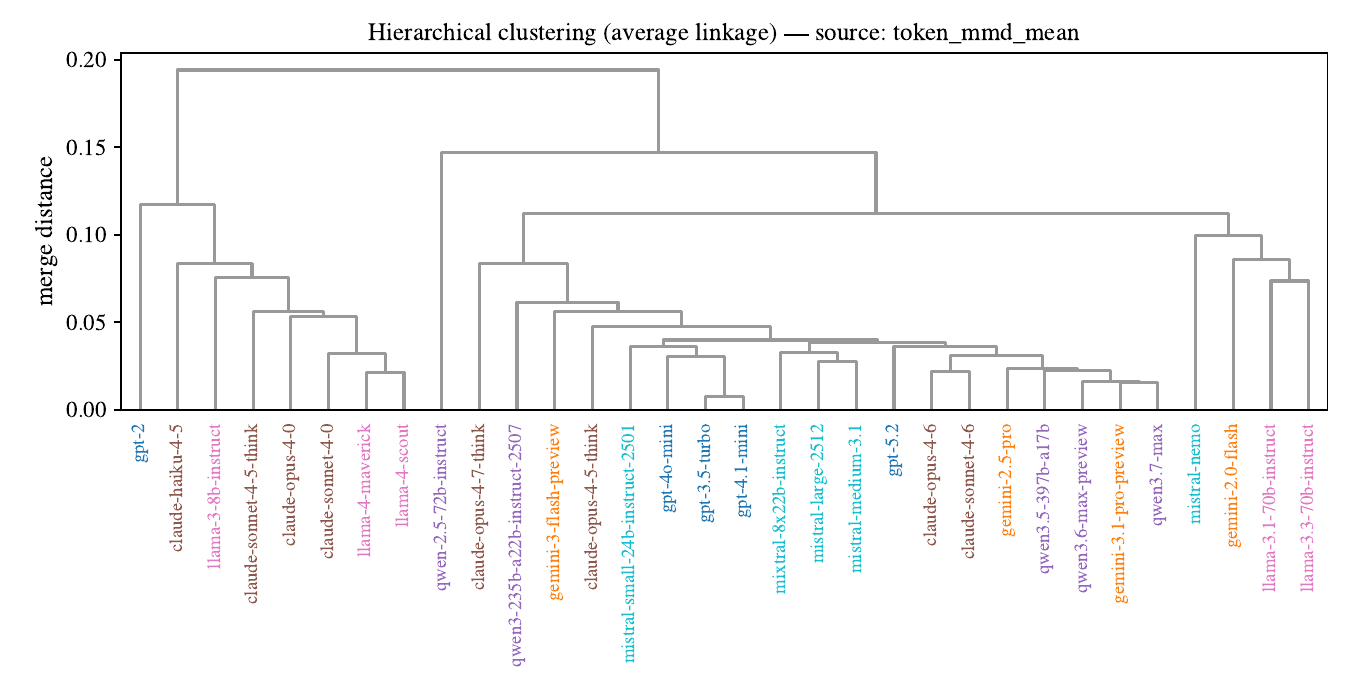}
\caption{Token-level MMD structure: family+date-ordered heatmap (top) and average-linkage
dendrogram (bottom). As at the sentence level, \texttt{gpt-2} and \texttt{qwen-2.5-72b} split off
first and recent models mix across families.}
\label{fig:tok_struct}
\end{figure*}

\begin{figure*}[t]
\centering
\includegraphics[width=\textwidth]{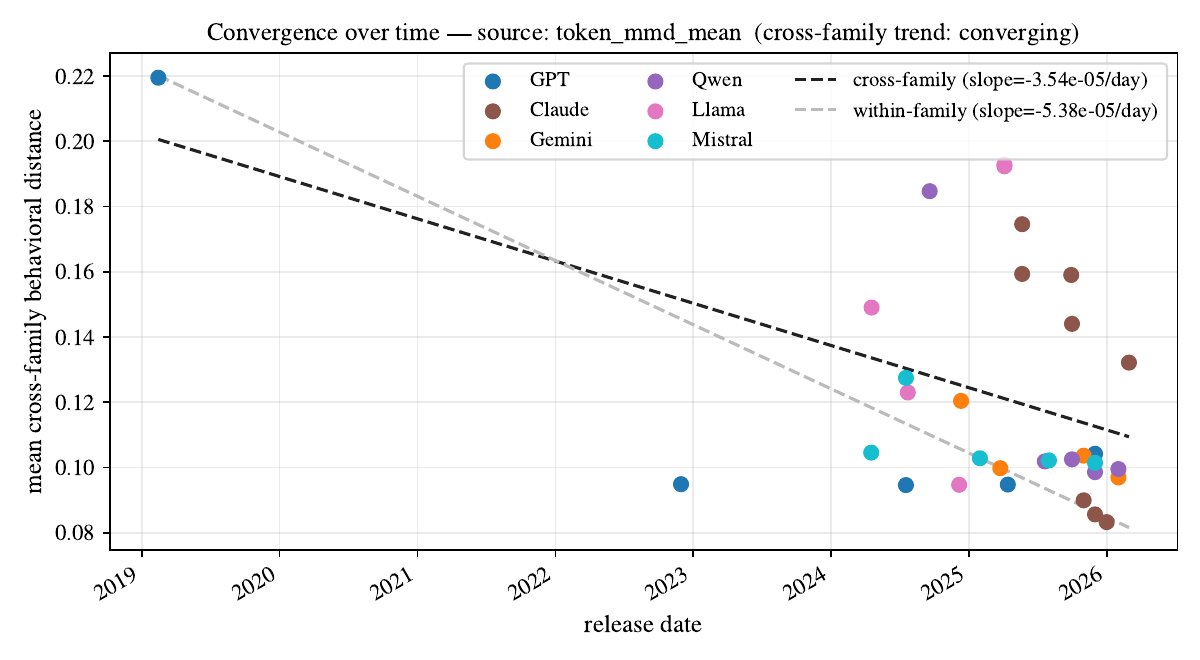}
\caption{Token-level convergence: mean cross-family MMD distance per model versus release date. The
downward trend reproduces the sentence-level frontier homogenization (cf.\ Fig.~\ref{fig:conv}).}
\label{fig:tok_conv}
\end{figure*}

\subsection{Detailed Proofs for the Training-Side Account}
\label{sec:theory_training_behavior}

This section proves Theorem~\ref{thm:behavioral_similarity} and
Corollaries~\ref{cor:semantic_similarity}--\ref{cor:mean_embedding_similarity}.
It also states precisely how the mean-embedding result relates to the
single-response distance used in the experiments.

\paragraph{Setup and conventions.}
Let $\mathcal{R}$ be the countable space of finite token sequences terminated by
an end-of-sequence token, and let $\mathcal{P}_m$ be a joint distribution over
$(X,R)$. We write $\mathcal{P}_{m,X}$ for its prompt marginal and
$p_m(\cdot\mid x)$ for the corresponding conditional response distribution.
The deployed autoregressive model defines
\[
q_m(r\mid x)=\prod_{t=1}^{|r|}q_m(r_t\mid x,r_{<t}),
\]
where the final token in $r$ is the end-of-sequence token. On this countable
space,
$\mathrm{TV}(P,Q)=\sup_A|P(A)-Q(A)|=\tfrac12\sum_r|P(r)-Q(r)|$.
The arguments extend to standard measurable response spaces with regular
conditional distributions. If $q_m(r\mid x)=0$ on a set assigned positive
probability by $p_m(\cdot\mid x)$, the conditional KL divergence is infinite;
the informative case is therefore finite excess risk.

\begin{lemma}[Cross-entropy excess equals conditional KL]
\label{lem:cross-entropy-kl}
For any conditional distribution $q$ with finite risk,
\[
\mathcal{R}_m(q)-\mathcal{R}_m(p_m)
=
\mathbb{E}_{x\sim\mathcal{P}_{m,X}}
\mathrm{KL}\!\left(p_m(\cdot\mid x)\Vert q(\cdot\mid x)\right).
\]
Consequently, $p_m$ minimizes population negative log-likelihood over all
conditional response distributions.
\end{lemma}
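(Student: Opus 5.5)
The plan is to prove the identity pointwise in $x$ and then integrate against $\mathcal{P}_{m,X}$. On the countable space $\mathcal{R}$, for fixed $x$ we decompose the conditional cross-entropy as
\[
-\sum_{r} p_m(r\mid x)\log q(r\mid x)
= -\sum_{r} p_m(r\mid x)\log p_m(r\mid x)
+ \sum_{r} p_m(r\mid x)\log\frac{p_m(r\mid x)}{q(r\mid x)}.
\]
The first term is the conditional entropy $H(p_m(\cdot\mid x))$, and the second is $\mathrm{KL}(p_m(\cdot\mid x)\Vert q(\cdot\mid x))$. We use the conventions $0\log 0=0$, and sums run over the support of $p_m(\cdot\mid x)$. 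Taking $\mathbb{E}_{x\sim\mathcal{P}_{m,X}}$ and using the tower property $\mathcal{R}_m(q)=\mathbb{E}_{x}\mathbb{E}_{r\sim p_m(\cdot\mid x)}[-\log q(r\mid x)]$ gives
\[
\mathcal{R}_m(q)=\mathbb{E}_x H(p_m(\cdot\mid x))+\mathbb{E}_x\mathrm{KL}\bigl(p_m(\cdot\mid x)\Vert q(\cdot\mid x)\bigr).
\]
Specializing to $q=p_m$ makes the KL term vanish, so $\mathcal{R}_m(p_m)=\mathbb{E}_x H(p_m(\cdot\mid x))$. Subtracting the two expressions yields the claimed identity. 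Minimality then follows from Gibbs' inequality, $\mathrm{KL}\ge 0$, with equality iff $q(\cdot\mid x)=p_m(\cdot\mid x)$ for $\mathcal{P}_{m,X}$-a.e.\ $x$.

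The main obstacle is integrability: the subtraction is legitimate only if $\mathcal{R}_m(p_m)$ is finite. On a countable space the conditional entropy can be $+\infty$ even when each term is finite, and then the difference is $\infty-\infty$.

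The primary route uses the finite-risk hypothesis. Each integrand is nonnegative ($-\log$ of a probability is nonnegative, as are entropy and KL), so the decomposition is an identity of nonnegative extended reals, justified by Tonelli for the exchange of sums. Applying finite risk to $q=p_m$, the usual reading (the minimizer has finite risk whenever some $q$ does), makes $\mathbb{E}_x H$ finite. Then $\mathcal{R}_m(q)<\infty$ forces both summands finite, and the subtraction is valid.

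If the entropy might be infinite, a fallback is to define the excess risk directly as $\mathbb{E}_x\mathbb{E}_{p_m}[\log(p_m/q)]$. The pointwise log-ratio decomposition then holds without subtraction, though its integrability needs a separate check. Either way, the definitions already in place in Eq.~\eqref{eq:excess-risk} make the lemma essentially a bookkeeping verification.
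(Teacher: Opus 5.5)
Your proof is correct and follows essentially the paper's route: the pointwise identity $\mathbb{E}_{p_m(\cdot\mid x)}[-\log q]-\mathbb{E}_{p_m(\cdot\mid x)}[-\log p_m]=\mathrm{KL}(p_m(\cdot\mid x)\Vert q(\cdot\mid x))$ is integrated over $x\sim\mathcal{P}_{m,X}$, and minimality follows from $\mathrm{KL}\ge 0$. Your integrability bookkeeping makes explicit what the paper's one-line subtraction leaves implicit, since finite $\mathcal{R}_m(q)$ together with the nonnegative split $\mathbb{E}_x H+\mathbb{E}_x\mathrm{KL}$ already forces $\mathcal{R}_m(p_m)<\infty$, so you need neither the extra assumption on $p_m$ nor your fallback. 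One small repair: the summands $p_m\log(p_m/q)$ are not termwise nonnegative, so the per-$x$ split of the cross-entropy sum rests not on Tonelli alone but on their negative parts being summable, $\sum_r p_m(r\mid x)\bigl(\log\frac{q(r\mid x)}{p_m(r\mid x)}\bigr)^{+}\le\sum_r q(r\mid x)\le 1$.
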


\begin{proof}
Conditioning first on $X=x$ and expanding the definition of KL gives
\begin{align*}
&\mathbb{E}_{r\sim p_m(\cdot\mid x)}[-\log q(r\mid x)]
-\mathbb{E}_{r\sim p_m(\cdot\mid x)}[-\log p_m(r\mid x)]\\
&\qquad=
\mathbb{E}_{r\sim p_m(\cdot\mid x)}
\log\frac{p_m(r\mid x)}{q(r\mid x)}
=\mathrm{KL}\!\left(p_m(\cdot\mid x)\Vert q(\cdot\mid x)\right).
\end{align*}
Taking expectation over $x\sim\mathcal{P}_{m,X}$ proves the identity. Since KL
is nonnegative, the minimum is attained at $q(\cdot\mid x)=p_m(\cdot\mid x)$
for $\mathcal{P}_{m,X}$-almost every $x$. For an autoregressive model, the same
identity can equivalently be expanded by the KL chain rule into the sum of
next-token conditional KL terms over all prefixes.
\end{proof}

\begin{lemma}[Transfer from training risk to inference prompts]
\label{lem:risk-tv-transfer}
Suppose Eq.~\eqref{eq:coverage} holds for model $m$, and write
$w_m=d\xi/d\mathcal{P}_{m,X}$. Then
\[
\mathbb{E}_{x\sim\xi}\mathrm{TV}\!\left(
 p_m(\cdot\mid x),q_m(\cdot\mid x)\right)
\leq
\sqrt{\frac{\kappa_m\varepsilon_m}{2}}.
\]
\end{lemma}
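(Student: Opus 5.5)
The plan is to combine Pinsker's inequality pointwise in $x$, a change of measure from $\xi$ to $\mathcal{P}_{m,X}$ using the density ratio $w_m$, and Jensen's inequality to pull the square root outside the expectation.

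\emph{Pointwise Pinsker.} For $\mathcal{P}_{m,X}$-almost every $x$, Pinsker's inequality on the countable response space gives
\[
\mathrm{TV}\bigl(p_m(\cdot\mid x),q_m(\cdot\mid x)\bigr)\leq\sqrt{\tfrac12\mathrm{KL}\bigl(p_m(\cdot\mid x)\Vert q_m(\cdot\mid x)\bigr)}.
\]
Since $\xi\ll\mathcal{P}_{m,X}$, the same inequality holds $\xi$-almost everywhere. This matters because $p_m(\cdot\mid x)$ is only determined up to $\mathcal{P}_{m,X}$-null sets.

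\emph{Jensen under $\xi$.} Because the square root is concave, taking $\mathbb{E}_{x\sim\xi}$ yields
\[
\mathbb{E}_{\xi}\mathrm{TV}\leq\sqrt{\tfrac12\,\mathbb{E}_{x\sim\xi}\mathrm{KL}\bigl(p_m(\cdot\mid x)\Vert q_m(\cdot\mid x)\bigr)}.
\]

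\emph{Change of measure.} I write $\mathbb{E}_{\xi}\mathrm{KL}=\mathbb{E}_{x\sim\mathcal{P}_{m,X}}\bigl[w_m(x)\,\mathrm{KL}(p_m(\cdot\mid x)\Vert q_m(\cdot\mid x))\bigr]$. Since KL is nonnegative and $w_m\leq\kappa_m$, this is at most $\kappa_m\,\mathbb{E}_{\mathcal{P}_{m,X}}\mathrm{KL}$. By Lemma~\ref{lem:cross-entropy-kl} that expectation equals $\varepsilon_m$, which is finite by assumption. Substituting back gives the stated bound $\sqrt{\kappa_m\varepsilon_m/2}$.

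\emph{Main obstacle.} The only real care needed is measure-theoretic. First, the integrand must be measurable in $x$; this holds because $\mathcal{R}$ is countable, or by regular conditional distributions in general. Second, the KL may be infinite on a $\mathcal{P}_{m,X}$-null set, but such sets are also $\xi$-null by absolute continuity, so they contribute nothing. Third, the change of measure uses nonnegativity, so Tonelli applies without any integrability concerns.

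One could instead apply Cauchy--Schwarz as $\mathbb{E}_\xi[\mathrm{TV}]=\mathbb{E}_{\mathcal{P}}[w\,\mathrm{TV}]$. That route gives a bound involving $\|w\|_2$ rather than $\kappa$, so the Jensen-then-reweight order above is the one that yields exactly the stated constant.
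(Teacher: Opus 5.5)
Your proposal is correct and matches the paper's proof: pointwise Pinsker, then Jensen for the concave square root under $\xi$, then the bound $w_m\le\kappa_m$ to move the KL expectation to $\mathcal{P}_{m,X}$, where Lemma~\ref{lem:cross-entropy-kl} identifies it with $\varepsilon_m$. One correction to your closing aside: the Cauchy--Schwarz route also gives the stated constant, and in fact a slightly sharper bound, because $\int w_m^2\,d\mathcal{P}_{m,X}=\mathbb{E}_{x\sim\xi}w_m(x)\le\kappa_m$.
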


\begin{proof}
For each prompt $x$, Pinsker's inequality yields
\[
\mathrm{TV}\!\left(p_m(\cdot\mid x),q_m(\cdot\mid x)\right)
\leq
\sqrt{\frac12
\mathrm{KL}\!\left(p_m(\cdot\mid x)\Vert q_m(\cdot\mid x)\right)}.
\]
Let
$K_m(x)=\mathrm{KL}(p_m(\cdot\mid x)\Vert q_m(\cdot\mid x))$.
Because the square-root function is concave, Jensen's inequality gives
\[
\mathbb{E}_{x\sim\xi}\sqrt{K_m(x)/2}
\leq \sqrt{\mathbb{E}_{x\sim\xi}K_m(x)/2}.
\]
The density-ratio assumption then transfers the KL expectation from the
inference distribution to the training marginal:
\begin{align*}
\mathbb{E}_{x\sim\xi}K_m(x)
&=\int K_m(x)w_m(x)\,d\mathcal{P}_{m,X}(x)\\
&\leq \kappa_m\int K_m(x)\,d\mathcal{P}_{m,X}(x)
=\kappa_m\varepsilon_m,
\end{align*}
where the last equality is Lemma~\ref{lem:cross-entropy-kl}. Combining the
three displays proves the claim.
\end{proof}

\paragraph{Full proof of Theorem~\ref{thm:behavioral_similarity}.}
\begin{proof}
Fix a prompt $x$. Since total variation is a metric on probability measures,
insert the two population conditionals and apply its triangle inequality:
\begin{align*}
\mathrm{TV}\!\left(q_m^x,q_{m'}^x\right)
\leq{}&\mathrm{TV}\!\left(q_m^x,p_m^x\right)
+\mathrm{TV}\!\left(p_m^x,p_{m'}^x\right)\\
&+\mathrm{TV}\!\left(p_{m'}^x,q_{m'}^x\right),
\end{align*}
where $p_j^x=p_j(\cdot\mid x)$ and $q_j^x=q_j(\cdot\mid x)$. Integrating with respect
to $x\sim\xi$, the middle term becomes
$\eta^{\mathrm{txt}}_{m,m'}$. Lemma~\ref{lem:risk-tv-transfer} bounds the first
and third terms by
$\sqrt{\kappa_m\varepsilon_m/2}$ and
$\sqrt{\kappa_{m'}\varepsilon_{m'}/2}$, respectively. Adding the three bounds
gives Eq.~\eqref{eq:tv-behavior-bound}.
\end{proof}

The result has the transparent decomposition
\[
\boxed{
\text{model difference}
\leq
\text{population difference}
+
\text{learning error of }m
+
\text{learning error of }m'.
}
\]
Small excess risks alone are not sufficient: even population-optimal models
can differ when $p_m(\cdot\mid x)$ and $p_{m'}(\cdot\mid x)$ differ on the
evaluation prompts.

\paragraph{Full proof of Corollary~\ref{cor:semantic_similarity}.}
\begin{proof}
For each prompt $x$, the triangle inequality for $W_1$ gives
\[
W_1(\nu_m^x,\nu_{m'}^x)
\leq W_1(\nu_m^x,\pi_m^x)
+W_1(\pi_m^x,\pi_{m'}^x)
+W_1(\pi_{m'}^x,\nu_{m'}^x).
\]
We first bound the two outer terms. For arbitrary response distributions $P,Q$ whose pushforwards are
supported in $\mathcal{S}_\phi$, take a maximal coupling $(U,V)$ satisfying
$\Pr(U\neq V)=\mathrm{TV}(P,Q)$ (which exists on the countable response space).
Because $\operatorname{diam}(\mathcal{S}_\phi)\leq B_\phi$,
\[
\|\phi(U)-\phi(V)\|_2
\leq B_\phi\mathbf{1}\{U\neq V\}.
\]
The pushforward of this coupling is a valid coupling of $\phi_{\#}P$ and
$\phi_{\#}Q$. Therefore, by Eq.~\eqref{eq:w1-definition},
\begin{equation}
W_1(\phi_{\#}P,\phi_{\#}Q)
\leq B_\phi\mathrm{TV}(P,Q).
\label{eq:w1-tv}
\end{equation}
Apply Eq.~\eqref{eq:w1-tv} to the first and third terms, average over
$x\sim\xi$, and invoke Lemma~\ref{lem:risk-tv-transfer} for models $m$ and $m'$.
This yields
\begin{align*}
\mathbb{E}_{x\sim\xi}W_1(\nu_m^x,\nu_{m'}^x)
\leq{}&\eta^{\mathrm{sem}}_{m,m'}
+B_\phi\sqrt{\frac{\kappa_m\varepsilon_m}{2}}\\
&+B_\phi\sqrt{\frac{\kappa_{m'}\varepsilon_{m'}}{2}},
\end{align*}
which is Eq.~\eqref{eq:semantic-behavior-bound}.
\end{proof}

\paragraph{Full proof of Corollary~\ref{cor:mean_embedding_similarity}.}
\begin{proof}
Fix a prompt $x$ and let $\gamma\in\Pi(\nu_m^x,\nu_{m'}^x)$ be any coupling.
Writing $(Z,Z')\sim\gamma$ and using the marginal constraints,
\begin{align*}
\|\mu_m(x)-\mu_{m'}(x)\|_2
&=\|\mathbb{E}_{\gamma}[Z-Z']\|_2\\
&\leq\mathbb{E}_{\gamma}\|Z-Z'\|_2,
\end{align*}
where the inequality follows from Jensen's inequality for the Euclidean norm.
Taking the infimum over all couplings $\gamma$ gives
\begin{equation}
\|\mu_m(x)-\mu_{m'}(x)\|_2
\leq W_1(\nu_m^x,\nu_{m'}^x).
\label{eq:mean-w1}
\end{equation}
Averaging Eq.~\eqref{eq:mean-w1} over $x\sim\xi$ and applying
Corollary~\ref{cor:semantic_similarity} proves
Eq.~\eqref{eq:mean-embedding-bound}.
\end{proof}

\paragraph{Connection to the observed single-response distance.}
Equation~\eqref{eq:dmean} is computed from one generated response per model and
prompt rather than from the conditional means. To state the connection
precisely, define the within-prompt semantic spread
\[
\omega_m=
\mathbb{E}_{x\sim\xi}
\mathbb{E}_{Z\sim\nu_m^x}\|Z-\mu_m(x)\|_2.
\]
Conditional on $x$, let $Z_m\sim\nu_m^x$ and
$Z_{m'}\sim\nu_{m'}^x$ be generated independently. The triangle inequality
through the two conditional means gives
\begin{align*}
\mathbb{E}\|Z_m-Z_{m'}\|_2
\leq{}&\mathbb{E}\|Z_m-\mu_m(x)\|_2
+\|\mu_m(x)-\mu_{m'}(x)\|_2\\
&+\mathbb{E}\|Z_{m'}-\mu_{m'}(x)\|_2.
\end{align*}
After averaging over prompts and using Eq.~\eqref{eq:mean-w1},
\begin{equation}
\begin{aligned}
&\mathbb{E}_{x\sim\xi}
\mathbb{E}\|Z_m-Z_{m'}\|_2\\
&\quad\leq
\eta^{\mathrm{sem}}_{m,m'}
+B_\phi\sqrt{\frac{\kappa_m\varepsilon_m}{2}}
+B_\phi\sqrt{\frac{\kappa_{m'}\varepsilon_{m'}}{2}}
+\omega_m+\omega_{m'}.
\end{aligned}
\label{eq:single-draw-bound}
\end{equation}
Thus, when responses are sampled from the model distributions, the empirical
mean per-prompt distance estimates a quantity controlled by population
similarity, learning errors, and within-prompt generation variability. The
$\omega$ terms are essential in general: an independent-sample transport cost
need not equal the optimal-coupling cost $W_1$.

\paragraph{Bounded downstream scores.}
For any measurable score $s(x,r)\in[0,1]$, the variational characterization of
total variation gives, for each $x$,
\[
\left|\mathbb{E}_{q_m(\cdot\mid x)}s(x,R)
-\mathbb{E}_{q_{m'}(\cdot\mid x)}s(x,R)\right|
\leq\mathrm{TV}\!\left(q_m(\cdot\mid x),q_{m'}(\cdot\mid x)\right).
\]
Averaging over $x\sim\xi$ and applying
Theorem~\ref{thm:behavioral_similarity} shows that behaviorally close models
have close expected values for every bounded evaluation score. This does not
imply high capability: the two models may be similarly incorrect.

\paragraph{Effect of training-set size.}
The theorem is stated in terms of population excess risk. A finite-sample
learning analysis may, under an appropriate oracle inequality for the chosen
model class and algorithm, yield a bound of the schematic form
\[
\varepsilon_m
\leq a_m+\tau_m+g_m(n_m,\delta),
\]
where $a_m$ is approximation error, $\tau_m$ is optimization error, and
$g_m(n_m,\delta)$ is a generalization term that decreases with sample size
$n_m$. Substituting such a bound into Theorem~\ref{thm:behavioral_similarity}
shows how more data can reduce the finite-sample contribution. This is not a
universal decomposition without additional capacity and algorithmic
assumptions, and large datasets alone do not eliminate approximation,
optimization, population-discrepancy, or coverage errors.

\paragraph{Scope and limitations of the sufficient condition.}
The theorem directly models log-loss training. Pretraining data alone may not
capture instruction tuning, preference optimization, safety training, or
system-level policies; applying the theorem to a post-trained system therefore
requires a separate argument that its deployed distribution has small excess
log-loss relative to an appropriate effective target population. The coverage assumption is also substantive: if the evaluation prompts
place mass outside the support of the effective training marginal, no finite
$\kappa_m$ exists and the theorem gives no guarantee.

Finally, the results concern probabilistic response distributions. They apply
directly when evaluation responses are sampled from $q_m$ (and remain valid
after a common response post-processing Markov kernel, since total variation
contracts under such a kernel). Greedy decoding is a deterministic functional of the probability vector rather
than a sample from it and can switch outputs discontinuously when the leading probabilities are nearly tied. A direct guarantee for greedy
generation therefore needs an additional margin or decoder-stability
assumption. Under these qualifications, decreasing excess risks and decreasing
$\eta^{\mathrm{sem}}_{m,m'}$ imply semantic convergence; a positive limiting
population discrepancy or irreducible learning error instead yields relative
homogenization toward a positive floor.
\end{document}